\documentclass[accepted]{uai2024} 
                        

\usepackage[american]{babel}

\usepackage{natbib} 
    \bibliographystyle{plainnat}
    
\usepackage{mathtools} 
\usepackage{booktabs} 
\usepackage{tikz} 

\usepackage{url}          
\usepackage{amsfonts}      
\usepackage{nicefrac}      
\usepackage{graphicx}
\usepackage{doi}

\usepackage{amsmath,amsthm}
\usepackage{graphicx}       
\usepackage{csquotes}
\usepackage{xcolor}
\usepackage{bbm}
\usepackage{bm}
\usepackage{subcaption}
\usepackage{caption}
\usepackage{enumitem}

\usepackage{multirow}




\DeclareMathOperator*{\lab}{\mathcal{Y}}

\DeclareMathOperator*{\Var}{\text{Var}}
\DeclareMathOperator*{\Cov}{\text{Cov}}
\DeclareMathOperator*{\ent}{\text{H}}
\DeclareMathOperator*{\mi}{\text{I}}
\DeclareMathOperator*{\kl}{D_{KL}}

\newtheorem{theorem}{Theorem}[section]

\newtheorem{prop}{Proposition}[section]

\newtheorem{definition}{Definition}[section]

\newcommand{\TU}{\text{TU}}
\newcommand{\EU}{\text{EU}}
\newcommand{\AU}{\text{AU}}

\renewcommand{\vec}[1]{\boldsymbol{#1}}
\newcommand{\vtheta}{{\vec{\theta}}}
\newcommand{\given}{\, | \,}
\newcommand{\fromto}{\longrightarrow}

\newcommand{\ksimplex}{\Delta_K}
\newcommand{\ksimplextwo}[1][K]{\Delta_{#1}^{(2)}}

\newcommand*{\defeq}{\mathrel{\vcenter{\baselineskip0.5ex \lineskiplimit0pt
			\hbox{\footnotesize.}\hbox{\footnotesize.}}}%
=}

\definecolor{darkolivegreen}{RGB}{85, 107, 47}
\definecolor{maroon}{RGB}{128, 0, 0}
\definecolor{PythonRed}{RGB}{255, 0, 0} 
\definecolor{PythonBlue}{RGB}{31, 119, 180}
\definecolor{Green}{RGB}{0, 128, 0}
\definecolor{Orange}{RGB}{255, 165, 0}

\title{Label-wise Aleatoric and Epistemic Uncertainty Quantification}

%
%
\author[1,3]{Yusuf~Sale}
\author[1,3]{Paul~Hofman}
\author[1,3]{Timo~L\"ohr}
\author[2,3]{Lisa~Wimmer}
\author[2,3]{Thomas~Nagler}
\author[1,3]{Eyke~H\"ullermeier}
\affil[1]{%
    Institute of Informatics\\
    LMU Munich\\
    Munich, Germany
}
\affil[2]{%
    Department of Statistics\\
    LMU Munich\\
    Munich, Germany
}
\affil[3]{%
    Munich Center for Machine Learning
  }
  
  \begin{document}
\maketitle

\begin{abstract}
We present a novel approach to uncertainty quantification in classification tasks based on label-wise decomposition of uncertainty measures. This label-wise perspective allows uncertainty to be quantified at the individual class level, thereby improving cost-sensitive decision-making and helping understand the sources of uncertainty. Furthermore, it allows to define total, aleatoric, and epistemic uncertainty on the basis of non-categorical measures such as variance, going beyond common entropy-based measures. In particular, variance-based measures address some of the limitations associated with established methods that have recently been discussed in the literature. We show that our proposed measures adhere to a number of desirable properties. Through empirical evaluation on a variety of benchmark data sets -- including applications in the medical domain where accurate uncertainty quantification is crucial -- we establish the effectiveness of label-wise uncertainty quantification. 
\end{abstract}

\section{Introduction}
\label{sec:intro}
Thanks to methods of unprecedented predictive power, machine learning (ML) is becoming more and more ingrained into peoples' lives.
It increasingly supports human decision-making processes in fields ranging from healthcare \citep{lambrou2010reliable, senge_2014_ReliableClassificationLearning, yang2009using, mobiny_2021_DropConnectEffectiveModeling} and autonomous driving \citep{michelmore_2018_EvaluatingUncertaintyQuantification} to socio-technical systems \citep{varshney2016engineering,varshney2017safety}.
The safety requirements of such applications trigger an urgent need to report \textit{uncertainty} alongside model predictions \citep{hullermeier2021aleatoric}.
Meaningful uncertainty estimates are indispensable for trust in ML-assisted decisions as they signal when a prediction is not confident enough to be relied upon.
    \begin{figure}[t!]
\centering
\begin{subfigure}{.5\linewidth}
  \centering
  \includegraphics[width=1\linewidth]{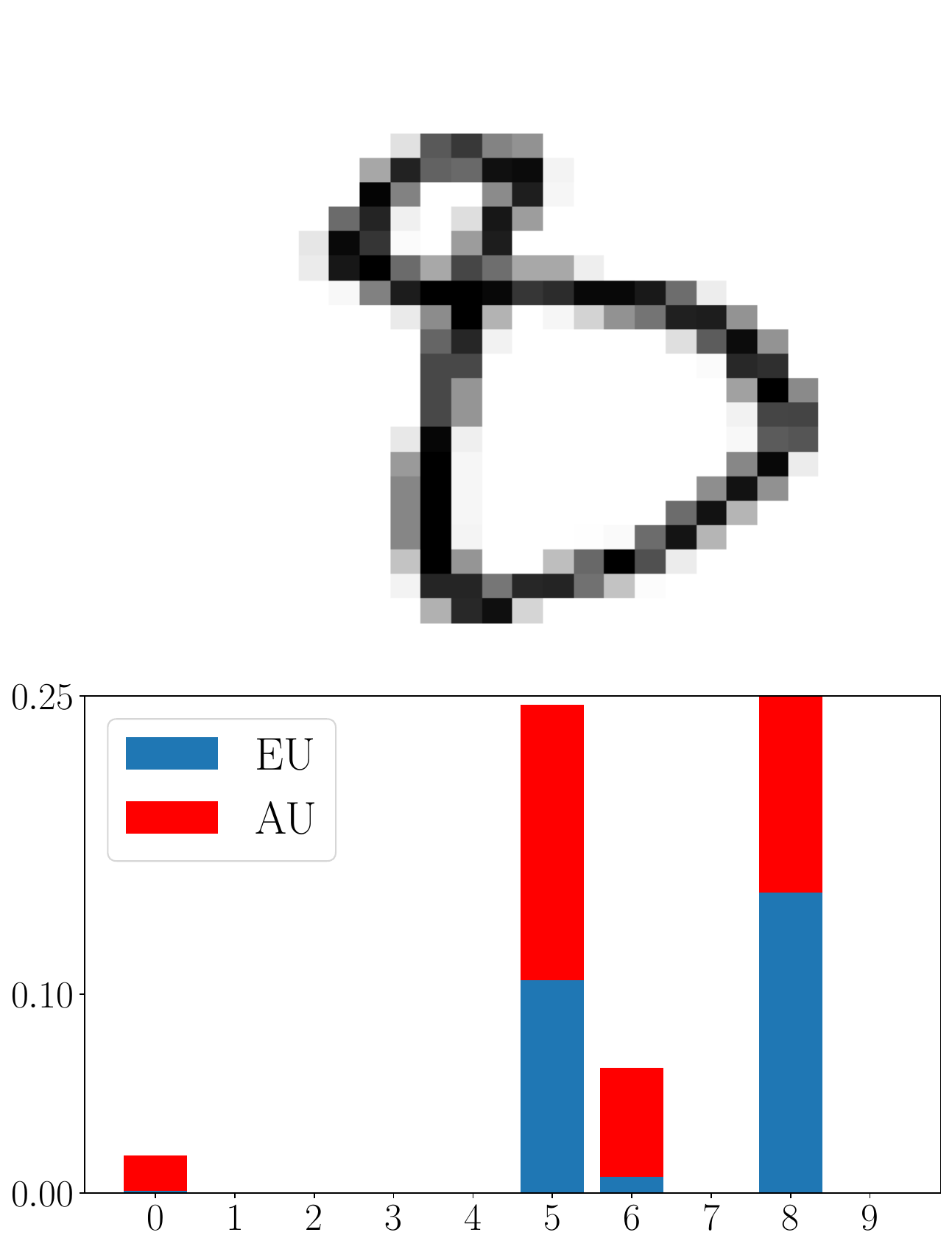}
\end{subfigure}%
\begin{subfigure}{.5\linewidth}
  \centering
  \includegraphics[width=1\linewidth]{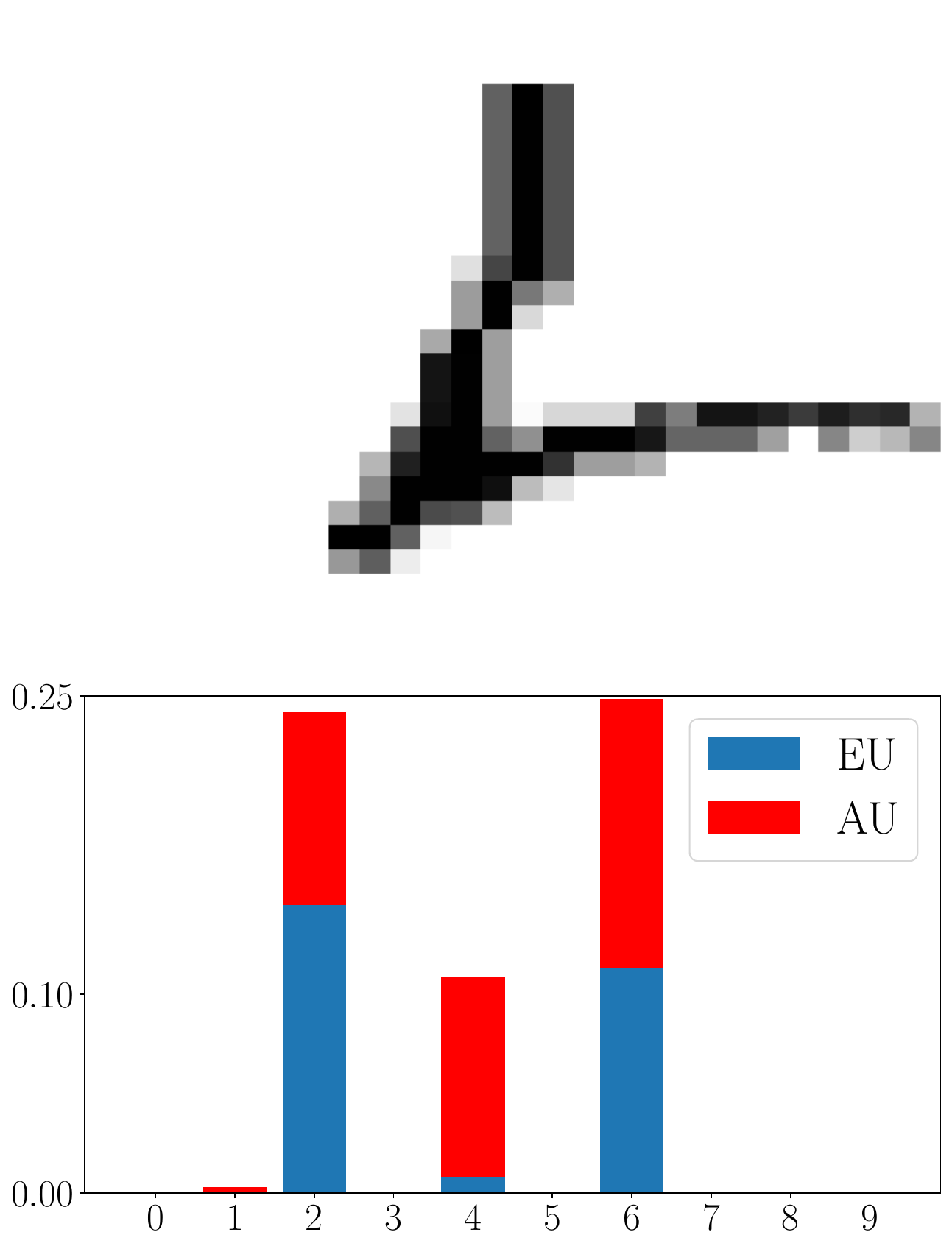}
\end{subfigure}
\caption{Label-wise \textit{\textcolor{PythonRed}{
        aleatoric}} and \textit{\textcolor{PythonBlue}{epistemic}} uncertainties for MNIST instances.} 
\label{fig1}
\vspace{-0.5cm}
\end{figure}
 
In order to address predictive uncertainty about a query instance $\boldsymbol{x}$ (e.g., an image like in Fig.\ \ref{fig1}), it is often crucial to identify its source.
For one, uncertainty can arise through inherent stochasticity of the data-generating process, omitted variables or measurement errors \citep{gruber2023sources}.
As such, \emph{aleatoric} uncertainty (AU) is a fixed but unknown quantity.
In addition, a lack of knowledge about the best way to model the data-generating process induces \emph{epistemic} uncertainty (EU).
Under the assumption that the model class is correctly specified, collecting enough information will reduce the EU until it vanishes in the limit of infinite data \citep{hullermeier2021aleatoric}.
The attribution of uncertainty to its sources can inform decisions in various ways.
For instance, it might help practitioners realize that gathering more data will be futile when only AU is present, or guide sequential learning processes like active learning \citep{shelmanov_2021_ActiveLearningSequence, nguyen2022measure} and Bayesian optimization \citep{HOFFER2023109279, stanton2023bayesian} by seeking out promising parts of the search space that can be explored to reduce EU while avoiding uninformative areas with high AU.

Quantifying both AU and EU necessitates a meaningful uncertainty \emph{representation}.
In supervised learning, we consider hypotheses in the form of probabilistic classifiers $h$ that map a query instance $\boldsymbol{x}$ to a probability distribution $p = h(\boldsymbol{x})$ on the label space. 
This prediction provides an estimate of the ground-truth (conditional) probability  $p^*(\cdot \, | \, \boldsymbol{x})$, i.e., $p(y)$ estimates the true probability $p^*(y \, | \, \boldsymbol{x})$ of observing class label $y$ as outcome given $\boldsymbol{x}$. 
When predicting a single 
class label in a deterministic way, $p$ will be a Dirac measure.
The case of probabilistic classification, which we study in this work, is more informative in the sense that a (posterior) probability is associated with all possible class labels, giving rise to a natural notion of AU around the observed outcome $y$. 
However, such probabilistic expressions are point predictions (in the space of probability distributions) derived from a single hypothesis $h$ learned on the training data.
Since all other candidates in the hypothesis space are discarded in the process, $p$ cannot, by design, represent EU \citep{hullermeier2021aleatoric}.

Expressing EU requires a further level of uncertainty representation. 
A straightforward approach is to impose a \emph{second-order} distribution, effectively assigning a probability (density) to each candidate first-order distribution $p$, and equate the dispersion of this distribution with EU.
Both the classical Bayesian paradigm \citep{gelman2013bayesian} and evidential deep learning (EDL) methods \citep{UlmerHF23} follow this idea.
As an alternative, methods founded on more general theories of probability, such as imprecise probabilities  \citep{walley1991,augustin2014introduction}, have been considered \citep{corani2012bayesian, sale2023volume}.

In recent years, probabilistic classification has increasingly embraced a bi-level distributional approach, with a predominant reliance on \textit{Shannon entropy} to dissect uncertainty into its aleatoric and epistemic components \citep[e.g.,][]{kendall2017uncertainties, smithGal2018, Charpentier2022NaturalPN}.
In this approach, the entropy of the categorical output distribution over class labels is associated with the total predictive uncertainty for a query instance $\boldsymbol{x}$. By a well-known result from information theory \citep{cover1999elements}, this quantity decomposes additively into \textit{conditional entropy} (representing AU) and \textit{mutual information} (representing EU). While this set of measures may seem concise and intuitive, \citet{wimmer2023quantifying} recently pointed out that it does not fulfill certain properties that one would naturally expect to hold. Whereas these methods solely focus on quantifying uncertainty at a \textit{global} level, we argue that this perspective may not suffice for all decision-making scenarios. To address this gap, we propose an approach centered on \textit{label-wise} uncertainty quantification. This approach allows for a more nuanced understanding of uncertainty, enabling decision-makers to evaluate the uncertainty associated with individual class predictions. 

By adopting a label-wise perspective, our method facilitates more informed decision-making, especially in contexts where the consequences of incorrect predictions---as in medical scenarios---differ between classes. \newpage This perspective on uncertainty quantification not only preserves the global perspective inherent in traditional approaches, but also enhances it by providing insights at the class level. Moreover, since a label-wise decomposition of uncertainty measures effectively amounts to reducing multinomial to binary classification, our approach is no longer restricted to uncertainty measures for categorical variables, such as entropy. Instead, it is amenable to a much broader class of measures, including variance as arguably the most common statistical measure of dispersion. 

Our contributions are as follows: 
\begin{itemize}
    \item[\textbf{(1)}] We propose a \textit{label-wise} perspective enabling reasoning about uncertainty at the individual class level, aiding decision-making especially in scenarios where the stakes of incorrect predictions vary across classes. To this end, we leverage entropy- and variance-based measures for label-wise uncertainty quantification. 
    \item[\textbf{(2)}] We showcase that adopting a label-wise perspective retains the \textit{global} perspective at the same time. In this regard, we demonstrate that the proposed measures satisfy a set of desirable properties, enhancing their theoretical appeal. In particular we show that our proposed variance-based measures overcome the drawbacks of the entropy-based approach, recently highlighted in the literature, without sacrificing practical applicability.
    \item[\textbf{(3)}] Through empirical evaluation, we validate the efficacy of our approach, demonstrating its competitiveness in (global) downstream tasks such as prediction with abstention and out-of-distribution detection.  Our empirical findings are substantiated across a range of classical machine learning benchmarks and verified in the medical domain, where suitable uncertainty quantification is indispensable. 
\end{itemize}

Proofs of our theoretical results can be found in Appendix~\ref{appendix:proofs}. For experimental details and supplementary experiments, refer to Appendix~\ref{appendix:exp_details} and Appendix~\ref{appendix:exp}, respectively.

\section{Quantifying Second-Order Uncertainty}
\label{sec:uq}
    In the following, we will be concerned with the supervised classification scenario. 
    We refer to $\mathcal{X}$ as \textit{instance} space, and we assume categorical target variables from a finite \textit{label} space $\lab = \{ y_1, \ldots, y_K \}$, where $K \in \mathbb{N}_{\geq 2}$. 
    Thus, each instance $\vec{x} \in \mathcal{X}$ is associated with a conditional distribution on the measurable space $(\lab, 2^{\lab})$, such that $\theta_k \defeq p(y_k \given \boldsymbol{x} )$ is the probability to observe label $y_k \in \mathcal{Y}$ given $\vec{x} \in \mathcal{X}$.
    Further, we note that the set of all probability measures on $(\lab, 2^{\lab})$ can be identified with the $(K-1)$-simplex
    $\ksimplex \defeq \left \{ \vtheta = (\theta_1, \ldots , \theta_K) \in [0,1]^K ~ \given~ \| \vtheta \|_1 = 1 \right \}$.
    Consequently, for each $\vtheta \in \ksimplex$, an associated degree of aleatoric uncertainty can be calculated. \newpage To effectively represent epistemic uncertainty, it is necessary for the learner to express its uncertainty regarding $\vtheta$. This can be achieved by a second-order probability distribution over the first-order distributions $\vtheta$.

    Two popular methods to obtain a second-order (predictive) distribution are Bayesian inference and Evidential Deep Learning. 
    In both approaches, we arrive at a second-order predictor $h_2: \mathcal{X} \fromto \ksimplextwo,$ where $\ksimplex^{(2)}$ denotes the set of all probability measures on $(\ksimplex, \sigma(\ksimplex))$; we call $Q \in \ksimplextwo$ a second-order distribution. For the sake of simplicity, we omit the conditioning on the query instance $\vec{x}$ in the notation. Hence, given an instance $\vec{x}$, the second-order distribution $Q$ represents our probabilistic knowledge about $\vtheta$, i.e., $Q(\vtheta)$ is the probability (density) of $\vtheta \in \ksimplex$.
    In the remainder of this paper, we assume that a second-order distribution $Q$ is already provided.

    Given an uncertainty representation in terms of a second-order distribution $Q \in \ksimplextwo$, the subsequent question is how to suitably quantify total, aleatoric, and epistemic uncertainty. Popular approaches to uncertainty quantification in the literature \citep{houlsby_2011_BayesianActiveLearning, gal_2016_UncertaintyDeepLearning, depeweg2018decomposition, smithGal2018, mobiny_2021_DropConnectEffectiveModeling} rely on information-theoretic measures derived from Shannon entropy \citep{shannon1948mathematical}. In the following section, we will revisit these commonly accepted entropy-based uncertainty measures and discuss meaningful properties that any uncertainty measure should possess.

\subsection{Entropy-Based Measures}
\label{subsec:entropy}
	We begin by revisiting the arguably most common entropy-based approach in machine learning for quantifying predictive uncertainty represented by a second-order distribution $Q$.
    This approach leverages (Shannon) entropy and its connection to mutual information and conditional entropy to quantify total, aleatoric, and epistemic uncertainty associated with $Q$.

    Shannon entropy for a (first-order) probability distribution $\vtheta \in \ksimplex$ is given by
    \begin{align}
		\ent(\vtheta) \defeq - \sum_{k = 1}^{K} \theta_k \log_2 \theta_k \, .
		\label{eq:entropy}
	\end{align}
    Now, let $Y: \Omega \fromto \lab$ be a (discrete) random variable, and denote by $\vtheta_{Y}$ its corresponding distribution on the measurable space $(\lab, 2^{\lab})$. Then, we can analogously define the entropy of the random variable $Y$ by simply replacing $\theta_k$ in \eqref{eq:entropy} by the respective distribution of $Y$.
    Entropy has established itself as an accepted uncertainty measure due to both appealing theoretical properties and the intuitive interpretation as a measure of uncertainty. In particular, it measures the uniformity degree of the distribution of a random variable. 

	Subsequently, following the notation of \citet{wimmer2023quantifying}, we assume that $\Theta \sim Q.$ Therefore, $\Theta: \Omega \fromto \ksimplex$ is a random first-order distribution which is distributed according to a second-order distribution $Q$, and consequently takes values $\Theta(\omega) = \vtheta$ in the $(K-1)$-simplex $\ksimplex$.

 Given a second-order distribution $Q$, we can consider its expectation given by 
	\begin{align}
		\bar{\vtheta} \defeq \mathbb{E}_Q[\Theta] = \int_{\ksimplex} \vtheta \; \mathrm{d}Q(\vtheta) \, ,
		\label{eq:aggregation}
	\end{align} 
	which yields a probability distribution $\bar{\vtheta}$ on $(\lab, 2^{\lab})$. This measure corresponds to the distribution of $Y$ when we view it as generated from first sampling $\Theta \sim Q$ and then $Y$ according to $\Theta$. Then, it is natural to define the measure of total uncertainty (TU) as the entropy \eqref{eq:entropy} of $\bar{\vtheta}$:
	\begin{align}
		\TU(Q) \defeq  \ent \left( \mathbb{E}_Q[\Theta]   \right) \, .
		\label{tu:entropy}
	\end{align}
    Similarly, aleatoric uncertainty (AU) can be defined in terms of \textit{conditional entropy} $\ent(Y|\Theta)$:
	\begin{align}
		\AU(Q) \defeq  \mathbb{E}_Q[ \ent(Y| \Theta) ] = \int_{\ksimplex} \ent(\vtheta) \; \mathrm{d}Q(\vtheta) \, .
		\label{au:entropy}
	\end{align}
    By fixing a first-order distribution $\vtheta \in \ksimplex$, all EU is essentially removed and only AU remains. However, as $\vtheta$ is not precisely known, we take the expectation with respect to the second-order distribution.
    The measure of epistemic uncertainty is particularly inspired by the widely known additive decomposition of entropy into \textit{conditional entropy} and \textit{mutual information} (see also Section 2.4 in \cite{cover1999elements}). This is expressed as follows:
	\begin{align}
		\underbrace{\ent(Y)}_{\textnormal{entropy}} = \underbrace{\ent(Y\, | \, \Theta)}_{\textnormal{conditional entropy}}+ \underbrace{\mi(Y, \Theta).}_{\textnormal{mutual information}} 
		\label{eu:entropy}
	\end{align}
	Rearranging \eqref{eu:entropy} for mutual information yields a measure of epistemic uncertainty
	\begin{align} \label{eu:entropy_2}
		\begin{split}
			\EU(Q) \defeq \mi(Y, \Theta) &= \ent(Y)- \ent(Y\, | \, \Theta). \\[0.2cm]
		\end{split}
	\end{align}
    While entropy, mutual information, and conditional entropy provide meaningful interpretations for quantifying uncertainties  within first-order predictive distributions, the suitability of these entropy-based measures for second-order quantification has been challenged by \cite{wimmer2023quantifying}.
    This criticism was substantiated on the basis of a set of desirable properties, which will be discussed next. 

\subsection{Desirable Properties}
\label{subsection:axioms}
    In this section we discuss desirable properties that any suitable uncertainty measure should fulfill. 
    In the (uncertainty) literature it is standard practice to establish measures based on a set of axioms \citep{pal1993uncertainty, bronevich2008axioms}. Such an axiomatic approach was also adopted in the recent machine learning literature \citep{hullermeier2022quantification, sale2023secondorder}.
    To this end, we revisit the axioms outlined by \cite{wimmer2023quantifying}, while also taking into account recently proposed properties that further refine the understanding of what constitutes a suitable measure of second-order uncertainty \citep{sale2023secondorder}.
    Before discussing the proposed axioms, we first provide some mathematical preliminaries.
    \begin{definition} \label{def:shifts}
    Let $\vec{\Theta} \sim Q,\, \vec{\Theta}^{\prime} \sim Q^{\prime}$ be two random vectors, where $Q, Q^{\prime} \in \ksimplextwo$. Denote by $\sigma(\vec{\Theta})$ the $\sigma$-algebra generated by the random vector $\vec{\Theta}$.
    Then we call $Q^{\prime}$
    \begin{itemize}
        \item[(i)] a mean-preserving spread of $Q$, iff $\vec{\Theta}^\prime \overset{d}{=} \vec{\Theta} + \vec{Z}$, for some random vector $\vec{Z}$ with $\mathbb{E}[\vec{Z} \given \sigma(\vec{\Theta})] = 0$ almost surely (a.s.) and $ \max_k \Var(Z_k) > 0$;
        \item[(ii)] a spread-preserving location shift of $Q$, iff $\vec{\Theta}^\prime \overset{d}{=} \vec{\Theta} + \vec{z}$, where $\vec{z} \neq 0$ is a constant;
        \item[(iii)] a spread-preserving center-shift of $Q$, iff it is a spread-preserving location shift with $\mathbb E[\vec{\Theta}'] = \lambda \mathbb E[\vec{\Theta}] + (1 - \lambda) (1/K, \dots, 1/K)^\top$ for some $\lambda \in (0, 1)$.
    \end{itemize}
    Note that for definitions (ii) and (iii) it should be guaranteed that the shifted probability measure $Q^{\prime}$ remains valid within its support. 
    \end{definition}
    Now, let $\TU$, $\AU$, and $\EU$ denote, respectively, measures $\ksimplextwo \to \mathbb{R}_{\geq 0}$ of total, aleatoric, and epistemic uncertainty associated with a second-order uncertainty representation $Q \in \ksimplextwo$.
    \cite{wimmer2023quantifying} propose that any uncertainty measure should fulfill (at least) the following set of axioms:

\begin{itemize}
    \item[A0] $\TU$, $\AU$, and $\EU$ are non-negative.
    \item[A1] $\EU(Q) = 0$, if and only if $Q = \delta_{\vtheta}$, where $\delta_{\vtheta}$ denotes the Dirac measure on some $\vtheta \in \ksimplex$.
    \item[A2] $\EU$ and $\TU$ are maximal for $Q$ being the uniform distribution on $\ksimplex$.
    \item[A3] If $Q'$ is a mean-preserving spread of $Q$, then $\EU(Q') \geq \EU(Q)$ (weak version) or $\EU(Q') > \EU(Q)$ (strict version), the same holds for TU.
    \item[A4] If $Q'$ is a spread-preserving center-shift of $Q$, then $\AU(Q') \geq \AU(Q)$ (weak version) or $\AU(Q') > \AU(Q)$ (strict version), the same holds for $\TU$.
    \item[A5] If $Q'$ is a spread-preserving location shift of $Q$, then $\EU(Q') = \EU(Q)$.
\end{itemize}
Axiom A0 is an obvious requirement, ensuring that such measures reflect a degree of uncertainty without implying the absence of information or negative uncertainty, which would be conceptually unsound. 
Axiom A1 addresses the behavior of $\EU$ in the context of Dirac measures, where a Dirac measure $\delta_{\vtheta}$ represents a scenario of complete certainty about $\vtheta \in \ksimplex$. The vanishing of $\EU$ in this context aligns with the intuitive understanding that epistemic uncertainty should be zero when there is absolute certainty about the true underlying model.
Further, Axiom A2 considers the condition under which $\EU$ and $\TU$ attain their maximal values, specifically when $Q$ is the uniform distribution on $\ksimplex$. This reflects situations of maximum uncertainty or ignorance, where the lack of knowledge about any specific outcome $\vtheta \in \ksimplex$ leads to the highest level of uncertainty. As we will discuss later, this axiom is not without controversy, particularly in the fields of statistics and decision theory.
Axiom A3 encapsulates the idea that spreading a distribution while preserving its mean should not reduce, and might increase, the epistemic (and thus, total) uncertainty. It underscores the notion that increased dispersion (while maintaining the mean) is associated with higher uncertainty, a concept that is central in statistics.
Conversely, leaving the dispersion constant but shifting the distribution closer to the barycenter of the simplex, thereby expressing a belief about $\vtheta$ that is closer to uniform, should be reflected by an increase in AU (Axiom A4).
Lastly, Axiom A5 asserts that a spread-preserving location shift, which alters the distribution's location without affecting its spread, should leave the epistemic uncertainty unchanged. This property highlights the distinct nature of epistemic uncertainty, which is sensitive to the spread of the distribution rather than its location \citep{hullermeier2022quantification}. 

Taking into consideration recently proposed criteria for measures of second-order uncertainty \citep{sale2023secondorder}, we expand the existing set of Axioms A0--A5 by introducing two additional properties. For the set of all mixtures of second-order Dirac measures on first-order Dirac measures we write
 \begin{align*} 
\Delta_{\delta_m} = \Big\{ \delta_m \in  \ksimplextwo \, : \,  
\delta_m =   \sum_{y \in \lab} \lambda_y \cdot \delta_{\delta_{y}}, \, \sum_{y \in \lab} \lambda_y = 1 \Big\} \,,
\end{align*}
where $\delta_{\delta_y}$ denotes the second-order Dirac measure on $\delta_y \in \ksimplex$ for $y \in \lab$. 
Each element in this set should arguably have no aleatoric uncertainty, such that we postulate the following Axiom A6. 
\begin{itemize}
    \item[A6] $\AU(\delta_{m}) = 0$ holds for any $\delta_m \in \Delta_{\delta_m}$ . 
\end{itemize}
Now, let $\lab_1$ and $\lab_2$ be partitions of $\lab$ and $Q \in \ksimplextwo$; further denote by $Q_{|\lab_i}$ the corresponding marginalized distribution for $i \in \{1, 2\}$. 

\begin{itemize}
\item[A7] $\TU_{\lab}(Q) \leq \TU_{\lab_1}(Q_{|\lab_1}) + \TU_{\lab_2}(Q_{|\lab_2})$, and the same holds for $\AU$ and $\EU$.
\end{itemize}

Axiom A7 guarantees that the total uncertainty of a second-order distribution is bounded by the sum of total uncertainties of its corresponding marginalizations.

\section{Label-wise Uncertainty Quantification}
\label{sec:label}
In this section, we propose to measure uncertainty in a label-wise manner, and to obtain the overall uncertainty associated with a prediction by aggregating the uncertainties across the individual labels. This approach allows us to adopt a \textit{label-wise} perspective while retaining the \textit{global} one.

Let us emphasize again that the label-wise perspective is particularly useful in settings where decisions following the prediction of different labels are associated with unequal costs. For instance, when predicting the sub-type of a certain medical condition, with costly treatment administered at occurrence of one of the sub-types, the marginal uncertainty about this category might be of particular interest. We present some experimental results on medical images in Section~\ref{sub:classwise}. The global view, 
on the other side, is crucial in scenarios where understanding the overall uncertainty is key to making informed decisions. For instance, $\TU$ serves as an indicator for the overall reliability of the model for the given observation. Meanwhile, $\AU$ and 
$\EU$ distinguish between the uncertainty arising from the data's inherent variability and that stemming from the model's knowledge limitations, respectively.

We denote by $\vec{Y}: \Omega \fromto \{0,1\}^K$ the $K$-dimensional random vector indicating the presence or absence of a particular label $y_k \in \mathcal{Y}$ for $k \in \{1, \dots, K\}$. Further, define $\Theta_k \coloneqq P(Y_k = 1)$ and assume that the random vector $\vec{\Theta} = (\Theta_1, \dots, \Theta_K)$ is distributed according to a second-order distribution $Q \in \ksimplextwo$, i.e., $\vec{\Theta} \sim Q$. Moreover, let $Q_k$ be the marginal distribution of the random variable $\Theta_k$, such that $\Theta_k \sim Q_k$, and denote its expectation by $\bar{\theta}_k = \mathbb{E}[\Theta_k]$.

Our general approach to label-wise uncertainty quantification adheres to the following template: First, we define \emph{local} measures of total, aleatoric, and epistemic uncertainty per label: $\TU(Q_k)$, $\AU(Q_k)$, $\EU(Q_k)$ for $k \in \{1, \ldots, K \}$. One way to define these measures in a meaningful way is to adopt a loss-based perspective: Consider a learner making probabilistic predictions $\hat{\theta}$ for a (binary) outcome $Y_k$, which are penalized with a loss $\phi(  \hat{\theta} , Y_k)$. If $Y_k$ is distributed according to $\theta_k$, then the expected loss is given by
\begin{equation}\label{eq:epsr}
\phi(\hat{\theta}, \theta_k) \defeq \mathbb{E}_{Y_k \sim \theta_k} \, \phi(  \hat{\theta} , Y_k) \, .
\end{equation}
In our case, $\theta_k$ itself is presumably distributed according to the second-order distribution $Q_k$, so the prediction $\hat{\theta}$ induces the expected loss
\begin{equation}\label{eq:etl}
\mathbb{E}_{\theta_k \sim Q_k} \, \phi( \hat{\theta} , \theta_k ) = \mathbb{E}_{\theta_k \sim Q_k} \, \mathbb{E}_{Y_k \sim \theta_k} \, \phi(  \hat{\theta} , Y_k) \, .
\end{equation}
Broadly speaking, the idea is as follows: If the expected loss (\ref{eq:etl}) can be kept small, by virtue of an appropriate prediction $\hat{\theta}$, then this signifies a situation of low (total) uncertainty. Otherwise, if this is not possible, then the uncertainty is high. More specifically, we suggest the following definitions for the three types of uncertainty: 
\begin{itemize}
\item Total uncertainty is the minimum of (\ref{eq:etl}), i.e., the expected loss of the risk-minimizing prediction $\hat{\theta}$ given knowledge of the second-order distribution $Q_k$:
\begin{equation}\label{eq:lwt}
\TU(Q_k) \defeq  \min_{\hat{\theta}} \, \mathbb{E}_{\theta_k \sim Q_k} \, \phi( \hat{\theta} , \theta_k ) 
\end{equation}

\item Aleatoric uncertainty is the expected loss of the risk-minimizing prediction $\hat{\theta}$ given knowledge about the true $\theta_k$ (sampled from $Q_k$). 
\begin{equation}\label{eq:lwa}
\AU(Q_k) \defeq  \mathbb{E}_{\theta_k \sim Q_k} \, \min_{\hat{\theta}} \, \phi( \hat{\theta} , \theta_k ) 
\end{equation}

\item Epistemic uncertainty is the difference between these two, i.e., the extra loss that is caused by the lack of knowledge about the true $\theta_k$:
 \begin{equation}\label{eq:lwe}
\EU(Q_k) \defeq  \TU(Q_k)  - \AU(Q_k) 
\end{equation}

\end{itemize}
In particular, total uncertainty reflects an optimistic perspective inherent in the idea of quantifying uncertainty in terms of \textit{unavoidable loss}. To illustrate, let us consider the following: Given a second-order distribution $Q_k$, from which a distribution $\theta_k$ will be sampled, one aims to predict $\hat{\theta}$ and will then incur the loss $\phi(\hat{\theta}, \theta_k)$. The objective is to minimize the expected loss, hence the minimum in \eqref{eq:lwt}. Success in minimizing this expected loss implies that $Q_k$ is "peaked" or close to a Dirac measure, indicating low uncertainty. Conversely, if $Q_k$ is widely spread and not very informative, the uncertainty is high, and even the optimal prediction $\hat{\theta}$ cannot ensure a low loss. This explains the rationale behind total uncertainty \eqref{eq:lwt}.

The additive relationship of the (global) entropy-based measures has been a subject of debate in the literature \citep{wimmer2023quantifying}. In our framework, it can be justified as follows: As discussed before, TU represents the unavoidable loss in predicting $\theta_k$, incorporating an epistemic component since the true data-generating process $\theta_k$ is unknown and only characterized by $Q_k$. This epistemic uncertainty would vanish if $\theta_k$ were known, leaving only aleatoric uncertainty. With $\theta_k$ known, the best prediction aligns with $\hat{\theta} = \theta_k$, resulting in a loss $\phi(\theta_k, \theta_k)$; for instance, in the case of log-loss, this equates to Shannon entropy. Therefore, AU is defined as the expectation with respect to $Q_k$ of this residual loss, as per Equation \eqref{eq:lwa}. Consequently, $\EU$ is measured by the difference between $\TU$ and $\AU$, indicating the extent (in expectation) to which the unavoidable loss can be mitigated by eliminating epistemic uncertainty.

Nevertheless, certain scenarios call for a \textit{global} perspective on predictive uncertainty. To obtain corresponding measures, the most obvious idea is to define total, aleatoric, and epistemic uncertainty associated with a second-order distribution $Q \in \ksimplextwo$ by summing over all label-wise uncertainties:
\begin{align}
\TU(Q) &\coloneqq \sum_{k = 1}^{K} \TU(Q_k)
\label{tu:label_general}\\
\AU(Q) &\coloneqq \sum_{k = 1}^{K} \AU(Q_k)
\label{au:label_general}\\
\EU(Q) &\coloneqq \sum_{k = 1}^{K} \EU(Q_k) 
\label{eu:label_general}
\end{align}

As mentioned, one advantage of the label-wise decomposition, which goes hand in hand with a binarization of the problem (the $Y_k$ are binary outcomes), is that it broadens the scope of measures that can be applied. 
Our approach as outlined above is ``parameterized'' by the loss function $\phi$. Natural candidates for this loss are (strictly) proper scoring rules \citep{gnei_sp05}, which have the meaningful property that the risk-minimizer $\hat{\theta}$ in (\ref{eq:epsr}) coincides with $\theta_k$ itself; therefore, total and aleatoric uncertainty become, respectively,
\begin{align}
\TU(Q_k) & =   \phi( \bar{\theta}_k , \bar{\theta}_k )  =  \phi(  \mathbb{E}[\Theta_k] ,  \mathbb{E}[\Theta_k] ) \label{eq:lwts} \\[0.1cm]
\AU(Q_k) & =   \mathbb{E}_{\theta_k \sim Q_k} \, \phi( \theta_k , \theta_k )  \label{eq:lwas}
\end{align}

Our construction extends the well-established information-theoretic decomposition of Shannon entropy into conditional entropy and mutual information, which are the most widely used measures for uncertainty quantification in machine learning. Specifically, when the loss function $\phi$ is the log-loss, entropy-based measures are recovered as a special case. Our generalization allows for the use of losses other than log-loss, such as variance, thereby broadening the scope and applicability of our framework. Furthermore, we demonstrate desirable properties of this generalization.

In the following, we propose two concrete instantiations: The log-loss $\phi(\hat{\theta} , Y) = - ( Y \log(\hat{\theta})+ (1-Y) \log(1- \hat{\theta}))$, 
which leads to entropy as (total) uncertainty, and the squared-error loss $\phi(\hat{\theta} , Y) = (\hat{\theta} - Y)^2$, which leads to variance as uncertainty measure.

We note that variance is an example of a measure that can be applied to the binary case but not to the categorical case in general. However, our label-wise approach addresses this issue, enabling the effective use of variance-based uncertainty measures for classification purposes.

\subsection{Entropy-based Measures}
In complete analogy to global entropy-based measures \eqref{tu:entropy}, \eqref{au:entropy}, and \eqref{eu:entropy} we can define the corresponding label-wise counterparts for all $k \in \{1,\dots,K\}$:

\begin{itemize}
    \item Label-wise total uncertainty (\ref{eq:lwts}) is given by $\ent(\bar{\theta}_k) = \ent(\mathbb{E}[\Theta_k])$. 
    \item Label-wise aleatoric uncertainty (\ref{eq:lwas}) is given by expected conditional entropy $\mathbb{E}[\ent(Y_k \given \Theta_k)]$.
    \item Label-wise epistemic uncertainty is given by the expected KL-divergence $\mathbb{E}[\kl(\Theta_k \, || \, \bar{\theta}_k)]$.
\end{itemize}
The corresponding global measures (\ref{tu:label_general}--\ref{eu:label_general}) are then given as follows:
\begin{align}
\TU(Q) &\coloneqq \sum_{k = 1}^{K} \ent(\mathbb{E}[\Theta_k])
\label{tu:label_entropy}\\
\AU(Q) &\coloneqq \sum_{k = 1}^{K} \mathbb{E}[\ent(Y_k\given\Theta_k)]
\label{au:label_entropy}\\
\EU(Q) &\coloneqq \sum_{k = 1}^{K} \mathbb{E}[\kl(\Theta_k \, || \, \bar{\theta}_k)]
\label{eu:label_entropy}
\end{align}

In the following we demonstrate which of the properties discussed in \ref{subsection:axioms} are fulfilled by the entropy-based measures constructed in a label-wise manner. 

\begin{theorem}
\label{thm:entropy_axioms}
Entropy-based measures \eqref{tu:label_entropy}, \eqref{au:label_entropy}, and \eqref{eu:label_entropy} satisfy Axioms A0, A1, A2 (only for $\TU$), A3 (strict version), A4 (strict version, only for $\TU$), A6, and A7. 
\end{theorem}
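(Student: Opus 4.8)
The plan is to verify each listed axiom by working label-wise and then lifting to the global measures via the sums \eqref{tu:label_entropy}--\eqref{eu:label_entropy}. The key reduction is that every global quantity is a finite sum of one-dimensional quantities indexed by $k$, and that each summand is expressed purely in terms of the marginal $Q_k$ of $\Theta_k$ on $[0,1]$. So for each axiom I would first establish the corresponding statement for a single coordinate $k$ and a generic binary second-order distribution $Q_k$, and then conclude for $Q$ by summing (for monotonicity-type axioms) or by observing that the relevant hypothesis on $Q$ passes to each $Q_k$.

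Concretely: For \textbf{A0} I would note $\ent(\bar\theta_k) \ge 0$, $\mathbb{E}[\ent(Y_k\mid\Theta_k)] \ge 0$ trivially, and for $\EU$ use the nonnegativity of the KL-divergence inside the expectation (equivalently, Jensen applied to the convex map $\theta\mapsto\ent(\theta)$ gives $\ent(\bar\theta_k)\ge\mathbb{E}[\ent(\Theta_k)]$, hence $\EU(Q_k)\ge 0$). For \textbf{A1}, $\EU(Q)=\sum_k \mathbb{E}[\kl(\Theta_k\,\|\,\bar\theta_k)]=0$ iff $\Theta_k=\bar\theta_k$ a.s.\ for all $k$, i.e.\ $\vec\Theta$ is a.s.\ constant, i.e.\ $Q=\delta_{\vtheta}$. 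For \textbf{A2} restricted to $\TU$, I would show that under the uniform $Q$ on $\ksimplex$ one has $\bar\theta_k = 1/K$ for all $k$, so $\TU(Q)=K\cdot\ent(1/K)$, and then argue this is the maximum: $\TU(Q)=\sum_k \ent(\mathbb{E}[\Theta_k])$ with $\sum_k \bar\theta_k = 1$, and maximizing $\sum_k \ent(\bar\theta_k)$ over the simplex constraint $\sum_k\bar\theta_k=1$ (treating each $\ent$ as the binary entropy $-\bar\theta_k\log\bar\theta_k-(1-\bar\theta_k)\log(1-\bar\theta_k)$) is a straightforward Lagrange/symmetry argument giving $\bar\theta_k=1/K$; since the uniform $Q$ realizes this, $\TU$ is maximal there. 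For \textbf{A6}, any $\delta_m=\sum_y\lambda_y\delta_{\delta_y}$ has each marginal $\Theta_k$ supported on $\{0,1\}$, so $\ent(Y_k\mid\Theta_k)=0$ a.s., hence $\AU(\delta_m)=0$. For \textbf{A7}, the claim is essentially an equality-to-inequality step: because the global measures are already defined as sums over coordinates, and a partition $\lab=\lab_1\sqcup\lab_2$ just splits the index set, $\TU_\lab(Q)=\sum_{k\in\lab_1}\TU(Q_k)+\sum_{k\in\lab_2}\TU(Q_k)$; I would need to check that $\TU_{\lab_i}(Q_{|\lab_i})$ is at least $\sum_{k\in\lab_i}\TU(Q_k)$ — here the subtlety is that marginalizing onto $\lab_i$ renormalizes the simplex, so one should check the label-wise marginals $Q_k$ are unaffected (or only increased) by this renormalization; since each $\TU(Q_k)$ depends only on $\bar\theta_k$ and the binary entropy is monotone on $[0,1/2]$, the renormalization can only enlarge $\bar\theta_k$ toward (or past) $1/2$, giving the inequality. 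The same argument applies verbatim to $\AU$ and $\EU$.

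The substantive work is in \textbf{A3} and \textbf{A4} (strict versions). For \textbf{A3}: if $Q'$ is a mean-preserving spread of $Q$, write $\vec\Theta'\overset{d}{=}\vec\Theta+\vec Z$ with $\mathbb{E}[\vec Z\mid\sigma(\vec\Theta)]=0$ and $\max_k\Var(Z_k)>0$. Then each coordinate $\Theta'_k\overset{d}{=}\Theta_k+Z_k$ with $\mathbb{E}[Z_k\mid\sigma(\Theta_k)]=\mathbb{E}[\mathbb{E}[Z_k\mid\sigma(\vec\Theta)]\mid\sigma(\Theta_k)]=0$, so $Q'_k$ is a (weak) mean-preserving spread of $Q_k$ in the Rothschild--Stiglitz sense, and in particular $\bar\theta'_k=\bar\theta_k$. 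For $\EU(Q_k)=\ent(\bar\theta_k)-\mathbb{E}[\ent(\Theta_k)]$: the first term is unchanged, and $\mathbb{E}[\ent(\Theta'_k)]\le\mathbb{E}[\ent(\Theta_k)]$ because $\ent$ is concave and $\Theta'_k$ is a mean-preserving spread of $\Theta_k$ (a second-order stochastic dominance / Jensen-for-martingales argument: condition on $\Theta_k$ and apply Jensen to the concave $\ent$). Hence $\EU(Q'_k)\ge\EU(Q_k)$ for every $k$, and summing gives $\EU(Q')\ge\EU(Q)$; for $\TU$, $\TU(Q'_k)=\ent(\bar\theta'_k)=\ent(\bar\theta_k)=\TU(Q_k)$, so $\TU$ is actually invariant — which means the "same holds for TU" part of A3 should be read as the weak inequality, and I would flag this (strictness for $\TU$ cannot hold, so the theorem's "strict version" must be understood as applying to $\EU$). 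For the strict $\EU$ inequality I would argue that since $\max_k\Var(Z_k)>0$, there is a coordinate $k_0$ with $\Var(Z_{k_0})>0$; if additionally $Z_{k_0}$ genuinely spreads mass (not merely relabels on a set where $\ent$ is affine — but $\ent$ is strictly concave, so there is no affine piece), strict concavity of the binary entropy on $(0,1)$ forces $\mathbb{E}[\ent(\Theta'_{k_0})]<\mathbb{E}[\ent(\Theta_{k_0})]$, hence strict increase. This is the step I expect to be the main obstacle: making the strictness airtight when the spread might push mass to the boundary $\{0,1\}$ where $\ent$ vanishes, or when the support of $\Theta_{k_0}$ is such that the conditional spread could in principle be degenerate for a.e.\ value — I would need the hypothesis $\Var(Z_{k_0})>0$ to rule this out via, e.g., the equality case in Jensen for strictly concave functions. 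For \textbf{A4} restricted to $\TU$: a spread-preserving center-shift gives $\bar\theta'_k=\lambda\bar\theta_k+(1-\lambda)/K$, so each $\bar\theta'_k$ moves strictly toward $1/K$; I would show $\ent(\bar\theta'_k)\ge\ent(\bar\theta_k)$ coordinatewise — this needs a small lemma that moving every coordinate of a probability vector toward $1/K$ (via the same convex combination with the uniform) increases each binary entropy $\ent(\bar\theta_k)$, which follows from concavity of $\ent$ together with $\ent(1/K)$ being large relative to $\ent(\bar\theta_k)$, but one must be careful because a single binary entropy $\ent(t)$ is not monotone: it increases on $[0,1/2]$ and decreases on $[1/2,1]$, so moving $\bar\theta_k$ toward $1/K$ increases $\ent(\bar\theta_k)$ only if $1/K\le 1/2$ (true since $K\ge 2$) and $\bar\theta_k$ and $1/K$ lie on the same side of $1/2$, or more carefully: $\ent(\lambda t+(1-\lambda)/K)\ge\lambda\ent(t)+(1-\lambda)\ent(1/K)\ge\ent(t)$ when $\ent(1/K)\ge\ent(t)$, and $\ent(1/K)=\max$ over the feasible range; the strict version follows since $\lambda\in(0,1)$ and $\ent$ is strictly concave unless $t=1/K$ already, in which case the shift is trivial. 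Summing over $k$ completes $\TU(Q')>\TU(Q)$; that $\AU$ need not satisfy A4 here is consistent with the theorem restricting A4 to $\TU$.

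Throughout, the only genuinely delicate points are (a) the strictness in A3, requiring the equality case of Jensen's inequality for the strictly concave binary entropy combined with $\max_k\Var(Z_k)>0$, and (b) the non-monotonicity of the binary entropy in A4, requiring the observation that $1/K\le 1/2$ and that the center-shift contraction toward $1/K$ increases entropy coordinatewise via $\ent(1/K)\ge\ent(\bar\theta_k)$. Everything else is bookkeeping: pushing hypotheses on $Q$ down to the marginals $Q_k$, applying one-dimensional concavity/Jensen facts, and summing over $k$.
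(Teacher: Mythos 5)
Your overall strategy matches the paper's: reduce every axiom to the one-dimensional marginals $Q_k$, apply Jensen/concavity facts for the binary entropy, and sum over $k$. Your treatments of A0, A1, A2, A6, and the $\EU$ part of A3 (Jensen applied conditionally on $\sigma(\vec{\Theta})$, to the convex KL divergence or equivalently to the concave binary entropy, with strictness from $\max_k \Var(Z_k)>0$) are essentially the paper's arguments, and your observation that $\TU$ is merely \emph{invariant} under a mean-preserving spread is exactly what the paper's proof of A3 establishes. However, your argument for A4 contains a genuine gap. You claim that moving each $\bar{\theta}_k$ toward $1/K$ increases $\ent(\bar{\theta}_k)$ coordinatewise, justified by the assertion that $\ent(1/K)$ is the maximum ``over the feasible range.'' This is false for $K\ge 3$: the binary entropy is maximized at $1/2$, not at $1/K$, and a coordinate of a mean vector in $\ksimplex$ can be closer to $1/2$ than $1/K$ is (e.g.\ $\bar{\vtheta}=(1/2,1/4,1/4)$ for $K=3$, where $\ent(\bar{\theta}_1)=1>\ent(1/3)$), in which case the center-shift strictly \emph{decreases} $\ent(\bar{\theta}_1)$. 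So the coordinatewise inequality you rely on fails. The correct argument --- the one the paper gives --- works with the aggregate $U(\vtheta)=\sum_{k=1}^K\ent(\theta_k)$ on $\ksimplex$: $U$ is strictly concave (diagonal negative-definite Hessian) with unique constrained maximizer $\vec{\beta}=(1/K,\dots,1/K)$, and since $\bar{\vtheta}'=\lambda\bar{\vtheta}+(1-\lambda)\vec{\beta}$ with $\lambda\in(0,1)$ and $\bar{\vtheta}\neq\vec{\beta}$, strict concavity gives $U(\bar{\vtheta}')\ge\lambda U(\bar{\vtheta})+(1-\lambda)U(\vec{\beta})>U(\bar{\vtheta})$; individual summands may decrease, but the sum strictly increases.

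A second, smaller issue is A7: you worry that marginalizing onto $\lab_i$ ``renormalizes the simplex'' and then invoke monotonicity of the binary entropy toward $1/2$, which would run into the same non-monotonicity problem as above. The paper's reading of A7 avoids this entirely: $\TU_{\lab_i}(Q_{|\lab_i})$ is simply $\sum_{k\in\lab_i}\ent(\mathbb{E}[\Theta_k])$ computed from the \emph{same} label-wise marginals, so the global sum splits exactly into the two partial sums and A7 holds with equality (likewise for $\AU$, hence for $\EU$ by additivity). No renormalization argument is needed, and introducing one would weaken rather than strengthen the proof.
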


\subsection{Variance-based Measures}
\label{sec:proposal}

Here, we leverage the \textit{law of total variance}: for any random variable  $X \in L^2(\Omega, \mathcal{A}, P)$ and sub-$\sigma$-algebra  $\mathcal{F} \subseteq \mathcal{A}$, 
\begin{align*}
    \Var(X) = \mathbb{E}[ \underbrace{\mathbb{E}[(X - \mathbb{E}[X \given \mathcal{F}])^2 \given \mathcal{F}]}_{\eqqcolon \Var(X \given \mathcal{F}) }]   + \Var(\mathbb{E}[X \given \mathcal{F}])   \, .
\end{align*}
Then, 
observing that $\sigma(\Theta_k) \subseteq \mathcal{F}$ for any $k \in \{1,\dots,K\}$, we get
\begin{align*}
 \Var(Y_k)&= \mathbb{E}[\Var(Y_k \given \sigma(\Theta_k)) ] + \Var(\mathbb{E}[Y_k \given \sigma(\Theta_k)] ) \\[0.2cm]
   &= \mathbb{E}[\Theta_k \cdot (1 - \Theta_k)] + \Var(\Theta_k) \, .
\end{align*}
This equality suggests an alternative definition of total uncertainty and its (additive) decomposition into an aleatoric and an epistemic part:
\begin{itemize}
    \item Label-wise total uncertainty is given by $\Var(Y_k)$ and is obtained as an instantiation of  (\ref{eq:lwts}) with $\phi$ the squared-error loss. 
    
    \item Label-wise aleatoric uncertainty (\ref{eq:lwas}) is captured by $ \mathbb{E}[\Theta_k \cdot (1 - \Theta_k)]$, reflecting the inherent randomness in the outcome of each $Y_k$.
    Just like conditional entropy, it can be seen as the (expected) ``conditional variance'' of $Y_k$ and corresponds to the expected squared-error loss provided the true value of $\Theta_k$ is given. 

    \item Label-wise epistemic uncertainty is quantified by $\Var(\Theta_k)$. 
    Just like mutual information corresponds to the expected reduction in log-loss achieved by the knowledge of $\Theta_k$, $\Var(\Theta_k)$ is the expected reduction of squared-error loss.
\end{itemize}

 	\begin{figure*}[ht!]
		\centering
		\includegraphics[width=0.9\linewidth]{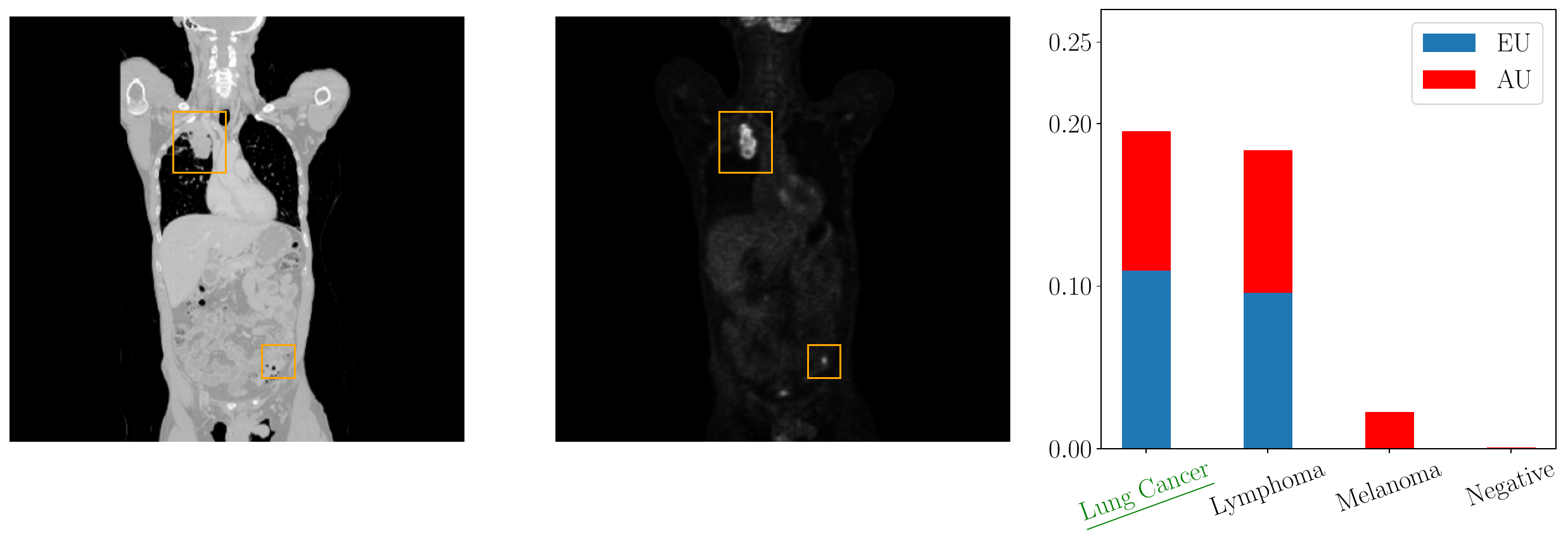}
		\caption{Coronal 2D image from a patient with malignant lesions. Left: CT, middle: PET with \textit{
        \textcolor{Orange}{segmentation}} by medical experts, right: corresponding \textit{\textcolor{PythonRed}{
        aleatoric}} and \textit{\textcolor{PythonBlue}{epistemic}} uncertainties, \textit{\textcolor{Green}{ground truth class}} and \textit{\underline{predicted class}}.}
		\label{fig:medical_example}
    \vspace{-0.25cm}
	\end{figure*}

Summing over all label-wise uncertainties yields global measures of total, aleatoric, and epistemic uncertainty:
\begin{align}
\TU(Q) &\coloneqq \sum_{k = 1}^{K} \Var(Y_k)
\label{tu:variance}\\
\AU(Q) &\coloneqq \sum_{k = 1}^{K} \mathbb{E}[\Theta_k  (1 - \Theta_k)]
\label{au:variance}\\
\EU(Q) &\coloneqq \sum_{k = 1}^{K} \Var(\Theta_k)
\label{eu:variance}
\end{align}

Finally we demonstrate that variance-based measures \eqref{tu:variance}, \eqref{au:variance}, and \eqref{eu:variance} satisfy a number of desirable properties, discussed in depth in Section \ref{subsection:axioms}.

\begin{theorem}
\label{thm:axioms}
Variance-based measures \eqref{tu:variance}, \eqref{au:variance}, and \eqref{eu:variance} satisfy Axioms A0, A1, A2 (only for $\TU$), A3 (strict version), A4 (strict version) and A5--A7.
\end{theorem}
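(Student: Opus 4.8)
The plan is to reduce the three global measures \eqref{tu:variance}, \eqref{au:variance}, \eqref{eu:variance} to closed forms in the marginal first and second moments of $Q$ and then check the axioms one at a time, exploiting throughout that $\TU$ depends on $Q$ only through the mean $\bar{\vtheta} = \mathbb{E}_Q[\vec{\Theta}]$. Writing $m_k \defeq \mathbb{E}[\Theta_k^2]$ and using $\sum_k \Theta_k = 1$ a.s., one has $\TU(Q) = \sum_k \bar{\theta}_k(1-\bar{\theta}_k) = 1 - \sum_k \bar{\theta}_k^2$, $\AU(Q) = \sum_k(\bar{\theta}_k - m_k)$, $\EU(Q) = \sum_k(m_k - \bar{\theta}_k^2)$, and $\AU(Q)+\EU(Q) = \TU(Q)$ is exactly the label-wise law of total variance derived in Section~\ref{sec:proposal}. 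Then: A0 is immediate, since each summand is nonnegative for $\Theta_k \in [0,1]$. A1 holds because $\EU(Q) = \sum_k\Var(\Theta_k) = 0$ forces each $\Theta_k$, hence $\vec{\Theta}$, to be a.s.\ constant, i.e.\ $Q$ a Dirac measure, the converse being trivial. For A2 (only for $\TU$), since $\bar{\vtheta}\in\ksimplex$ always and $\|\bar{\vtheta}\|_2^2 \ge 1/K$ on $\ksimplex$ with equality at the barycenter $\vec{c} = (1/K,\dots,1/K)^\top$ (Cauchy--Schwarz), which by symmetry is the mean of the uniform $Q$, we get $\TU(Q) \le 1 - 1/K = \TU(Q_{\mathrm{unif}})$. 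I would add a one-line remark that $\EU$ genuinely violates A2 -- e.g.\ $Q = \tfrac12\delta_{\delta_{y_1}} + \tfrac12\delta_{\delta_{y_2}}$ yields $\EU(Q) = \tfrac12$, exceeding $\EU(Q_{\mathrm{unif}})$ -- which explains the qualifier.

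For A3, write $\vec{\Theta}' \overset{d}{=} \vec{\Theta} + \vec{Z}$ with $\mathbb{E}[\vec{Z} \mid \sigma(\vec{\Theta})] = 0$ a.s. Taking expectations gives $\mathbb{E}[\vec{Z}] = 0$ and $\Cov(\Theta_k, Z_k) = \mathbb{E}[\Theta_k\,\mathbb{E}[Z_k\mid\sigma(\vec{\Theta})]] = 0$, so $\Var(\Theta'_k) = \Var(\Theta_k) + \Var(Z_k)$ and $\EU(Q') = \EU(Q) + \sum_k\Var(Z_k) > \EU(Q)$ since $\max_k\Var(Z_k) > 0$; the mean is unchanged so $\TU(Q') = \TU(Q)$, i.e.\ $\EU$ satisfies the strict version and $\TU$ the weak (equality) one. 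For A4, a spread-preserving center-shift has $\Var(\Theta'_k) = \Var(\Theta_k)$ and $\bar{\vtheta}' = \lambda\bar{\vtheta} + (1-\lambda)\vec{c}$ with $\lambda\in(0,1)$ and $\bar{\vtheta}\ne\vec{c}$ (a nontrivial shift is only defined in this case). Using $\langle\bar{\vtheta}-\vec{c},\vec{c}\rangle = \tfrac1K(\sum_k\bar{\theta}_k - 1) = 0$ one gets $\|\bar{\vtheta}\|_2^2 - \|\bar{\vtheta}'\|_2^2 = (1-\lambda^2)\,\|\bar{\vtheta}-\vec{c}\|_2^2 > 0$, so $\TU(Q') - \TU(Q) = \|\bar{\vtheta}\|_2^2 - \|\bar{\vtheta}'\|_2^2 > 0$ and, since the spread is preserved, $\AU(Q') - \AU(Q) = \TU(Q') - \TU(Q) > 0$; both strict. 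A5 is immediate: a location shift leaves every $\Var(\Theta_k)$ unchanged. For A6, for $\delta_m \in \Delta_{\delta_m}$ the vector $\vec{\Theta}$ takes values among the vertices $\delta_{y}$, so $\Theta_k \in \{0,1\}$ a.s.\ and $\Theta_k(1-\Theta_k) = 0$ a.s., whence $\AU(\delta_m) = 0$.

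The one axiom needing genuine care is A7, where I first have to fix what $Q_{|\lab_i}$ means: the marginalization collapses the complementary block of labels into a single ``rest'' category, so $Q_{|\lab_1}$ is the law of $\bigl((\Theta_k)_{y_k\in\lab_1},\,\Theta_*\bigr)\in\Delta_{|\lab_1|+1}$ with $\Theta_* = 1 - \sum_{y_k\in\lab_1}\Theta_k$, and symmetrically for $\lab_2$. Applying the law of total variance to the merged coordinate exactly as in Section~\ref{sec:proposal} gives $\TU_{\lab_1}(Q_{|\lab_1}) = \sum_{y_k\in\lab_1}\Var(Y_k) + \Var(Y_*)$, and likewise $\AU_{\lab_1}(Q_{|\lab_1})$ and $\EU_{\lab_1}(Q_{|\lab_1})$ equal the partial sum over $\lab_1$ plus one nonnegative term ($\mathbb{E}[\Theta_*(1-\Theta_*)]$ resp.\ $\Var(\Theta_*)$); adding the two blocks and discarding the extra nonnegative terms yields $\TU_\lab(Q) \le \TU_{\lab_1}(Q_{|\lab_1}) + \TU_{\lab_2}(Q_{|\lab_2})$ and the analogous bounds for $\AU$ and $\EU$. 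In summary, every individual verification is short; the main obstacle is conventional rather than computational -- pinning down the marginalization used in A7, and stating precisely the asymmetry in A2 ($\TU$ only) and A3 (strict for $\EU$, equality for $\TU$) -- with the only delicate point in a computation being the observation that a center-shift forces $\bar{\vtheta}\ne\vec{c}$, which is exactly what makes the A4 inequalities strict.
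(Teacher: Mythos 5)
Your proof is correct, and for most axioms (A0, A1, A3, A5, A6) it coincides with the paper's argument essentially line by line. The differences lie in A2, A4, and A7. For A2 the paper establishes that the barycenter maximizes $\sum_k\theta_k(1-\theta_k)$ via a Hessian/Lagrangian argument, whereas you get the same conclusion from $\|\bar{\vtheta}\|_2^2\ge 1/K$; your added counterexample $Q=\tfrac12\delta_{\delta_{y_1}}+\tfrac12\delta_{\delta_{y_2}}$, with $\EU(Q)=\tfrac12$ exceeding $\EU(Q_{\mathrm{unif}})=\tfrac{K-1}{K(K+1)}$, makes the failure of A2 for $\EU$ concrete, where the paper only argues by contradiction that $\EU$ cannot be maximal at the uniform. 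For A4 the paper asserts that the barycenter being the maximizer ``immediately implies'' $\TU(Q')>\TU(Q)$; your identity $\|\bar{\vtheta}\|_2^2-\|\bar{\vtheta}'\|_2^2=(1-\lambda^2)\,\|\bar{\vtheta}-\vec{c}\|_2^2$ supplies the monotonicity-along-the-segment step that the paper leaves implicit (both proofs then transfer the strict inequality to $\AU$ via A5). The one genuine divergence is A7: the paper reads $\TU_{\lab_i}(Q_{|\lab_i})$ as the plain partial sum $\sum_{k\in\lab_i}\Var(Y_k)$, so that A7 holds with equality, while you collapse the complementary labels into a single ``rest'' category, which adds one nonnegative extra term per block and yields the inequality. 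Your reading is the more faithful one if $Q_{|\lab_i}$ is required to live on a simplex, and it still proves the axiom, so nothing is lost either way.
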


Let us highlight that, while entropy-based measures do not fulfill property A5 (as previously pointed out by \cite{wimmer2023quantifying}), this property is now met by the variance-based measures.

On a further note, let us remark that the idea of using variance-based uncertainty measures is not completely new. In particular, a decomposition derived from the law of total variance has been used in regression problems for quite some time \citep{depeweg2018decomposition}.
Moreover, \citet{duan2023evidential} introduce variance-based uncertainty measures for classification, yet they motivate this from the EDL paradigm and do not discuss any theoretical properties.

\section{Experiments}
In this experimental section, our aim is twofold. Firstly, we empirically illustrate the practical efficacy of the \textit{label-wise} uncertainty quantification approach, as motivated in the preceding sections. This is achieved through experiments conducted on medical data sets, where uncertainty quantification is deemed particularly critical. Our results not only reinforce the theoretical underpinnings discussed earlier but also highlight the importance of reliable uncertainty quantification in high-stakes medical applications.

Secondly, additional experiments on common benchmark data sets are designed to illustrate that adopting a label-wise perspective does not come at the expense of a \textit{global} viewpoint. 
Due to the fundamental lack of a ground truth in studying uncertainty (as opposed to predictive performance where ground-truth labels are usually available), it is challenging to assess the quality of the uncertainty estimates. 

As such, we study the (global) effectiveness of the proposed measures in two different tasks: prediction with abstention and out-of-distribution (OoD) detection.

Details on model architecture and training setup as well as additional experiments can be found in Appendix \ref{appendix:exp_details} and Appendix \ref{appendix:exp}, respectively. The code is available in a public repository (\url{https://github.com/YSale/label-uq}).

\subsection{Label-wise Uncertainties}
\label{sub:classwise}
For the evaluation in the medical domain, we use a data set of Positron Emission Tomography/Computed Tomography (PET/CT) images which comprises 501 full-body scans from patients with malignant lymphoma, melanoma, and lung cancer, as well as 513 scans from individuals without malignant lesions (negative controls) \citep{Datasetgatidis2022whole}. Each scan is annotated with a tumor segmentation performed by a medical expert. We extract from each 3D CT and PET volume multiple 2D images which are used to train a deep neural network ensemble and evaluate the label-wise uncertainty quantification. 

Figure \ref{fig:medical_example} depicts a qualitative example of a 2D PET/CT image from the data set with the corresponding label-wise uncertainties from our evaluation. We observe low aleatoric uncertainty and negligible epistemic uncertainty for the melanoma class. This implies that the approximation of the aleatoric uncertainty is reliable. On the contrary, the classes lung cancer and lymphoma are associated with high aleatoric and high epistemic uncertainty, suggesting that the prediction with respect to these classes may not be accurate. This observation is plausible from a medical perspective as we observe multiple tumors in the image which are not limited to the lung area and thus might indicate a different tumor class as well. In this instance, we could request a medical expert to annotate additional data for the classes lung cancer and lymphoma, thereby diminishing the epistemic uncertainty associated with these classes. Here, a global measure of uncertainty would only give epistemic uncertainty with respect to all classes, meaning a doctor would have to annotate data for all classes.

Having a more detailed understanding of the label-wise uncertainties is crucial for the decision-making in medical applications supporting a given diagnosis. Moreover, it allocates resources to the relevant classes and saves valuable examination time and costs.

In Appendix \ref{app:label}, we provide further examples of images with the highest total, aleatoric, and epistemic uncertainty.

\subsection{Accuracy-Rejection Curves}
\label{subsec:arcs}
We generate Accuracy-Rejection Curves (ARCs) by rejecting the predictions for instances on which the predictor is most uncertain and computing the accuracy on the remaining subset \citep{huhn2009}. Given a good uncertainty quantification method, the accuracy should monotonically increase with the percentage of rejected instances, because the model misclassifies instances with low uncertainty less often than instances with high uncertainty.

To approximate the second-order distribution, we train an ensemble of five neural networks on the CIFAR10 data set \citep{krizhevsky2009learning}. 
We compare the proposed label-wise entropy- and variance-based uncertainty measures to the entropy-based baseline (cf. Section \ref{subsec:entropy}) as used in the Bayesian setting. Figure \ref{fig:mlarcs} shows the ARCs for the CIFAR10 data set. The accuracies are reported as the mean over five independent runs and the standard deviation is depicted by the shaded area.

The ARCs for all uncertainty measures closely align with the entropy-based baseline and exhibit similar qualitative behaviors. This highlights the effectiveness of the global measure derived from the local (label-wise) measures.

In this regard, let us note that our goal is \textit{not} to demonstrate that label-wise constructed measures always perform better than their entropy equivalents. Instead, we show that
they fulfill many desirable properties and are highly competitive in downstream task applications. In particular, this is a very relevant use-case in medical scenarios as it implies that a high classification accuracy can be reached by rejecting a portion of the data. In practice, this could mean that a machine learning algorithm, supplying predictions for unambiguous cases, can work in tandem with a medical expert, examining images deemed too difficult for machine-based prediction, to achieve high accuracy overall.

Further experiments using medical and machine learning data sets can be found in Appendix \ref{appendix:exp}. 
\begin{figure}[t!]
\vspace{-0.728cm}
    \centering
    \includegraphics[width=0.5\textwidth]{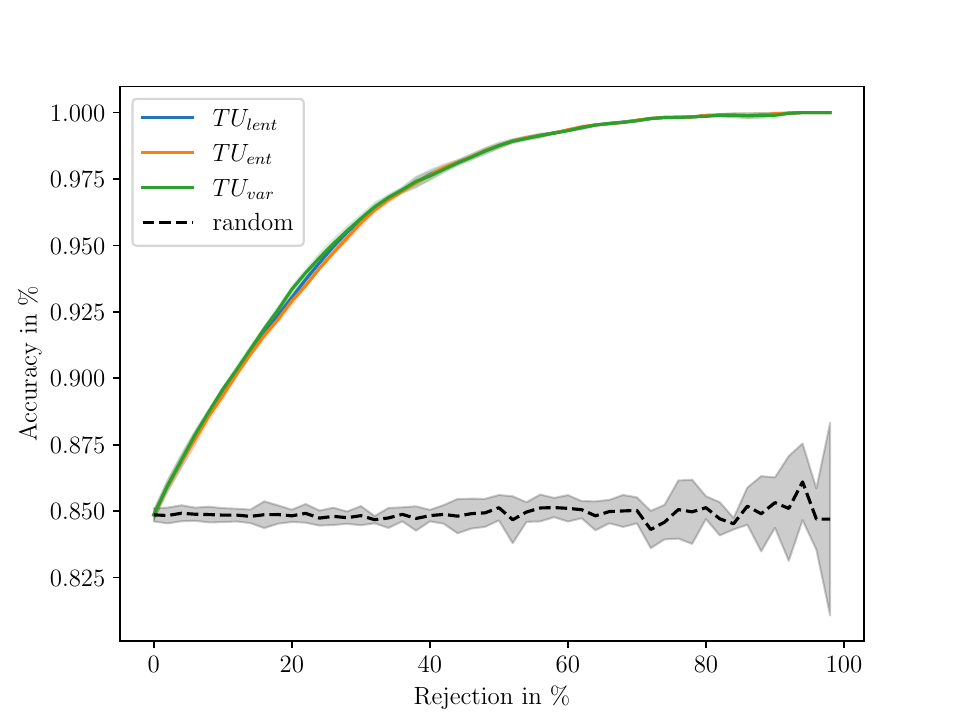}
    \caption{Accuracy-rejection curves generated on the CIFAR10 data set. We compare entropy ($\TU_{\text{ent}}$), label-wise constructed entropy ($\TU_{\text{lent}}$), and the variance-based ($\TU_{\text{var}}$) measure of total uncertainty.}
    \label{fig:mlarcs}
    \vspace{-0.25cm}
\end{figure}
\subsection{Out-of-Distribution Detection}
Another way to assess and compare measures of uncertainty is to conduct out-of-distribution detection experiments.
We train a model on an in-distribution (iD) data set, and compute uncertainty values on instances of the iD test set. Subsequently, the model is exposed to data from an OoD data set, and we similarly assess the uncertainty for these new instances.
The model, which has not previously encountered the OoD data, is expected to exhibit increased epistemic uncertainty for these instances. The ability to distinguish between iD and OoD data is an important requirement for a reliable machine learning model, because accurate predictions cannot be guaranteed on OoD data.

Our approach involves training an ensemble of five neural networks on the FashionMNIST \citep{xiaoFashion2017} data set (iD), using MNIST \citep{lecun1998} and KMNIST \citep{clanuwat2018deep} as our chosen OoD data sets. To determine the effectiveness in distinguishing between iD and OoD instances using total uncertainty, we calculate the \texttt{AUROC} and compute its mean and standard deviation across five independent runs. Similarly, we conduct OoD experiments for CIFAR10 (iD) with SVHN \citep{netzer2011reading} and CIFAR10.2 \citep{luHarderDifferentCloser} as OoD data sets.

Table~\ref{table:ood1} shows the results for the networks trained on FashionMNIST or CIFAR10. Overall, the compared measures perform well. In particular, the label-wise measures and the entropy-based measures yield similar results, emphasizing again that using the label-wise measures in a global way does not sacrifice performance.

\begin{table}[t!]
\captionof{table}{
  OoD detection performance. \texttt{AUROC} and standard deviation over five runs are reported. $\EU_{\text{ent}}$ denotes mutual information, $\EU_{\text{var}}$ the variance-based measure, and $\EU_{\text{lent}}$ label-wise entropy. Best performance is in \textbf{bold}.
}
\resizebox{0.49\textwidth}{!}{
  \begin{tabular}{@{}ccccc@{}}\toprule
  & \multicolumn{2}{c}{FashionMNIST} & \multicolumn{2}{c}{CIFAR10} \\ \midrule 
  Methods & MNIST & KMNIST & SVHN & CIFAR10.2 \\ \midrule
  $\EU_{\text{var}}$ & .882 $\pm$ 0.18 & .959 $\pm$ .005 & \textbf{.761 $\pm$ .022} & .999 $\pm$ .001 \\
  $\EU_{\text{lent}}$ & .894 $\pm$ .017 & .967 $\pm$ .004 & .745 $\pm$ .027 & \textbf{1.00 $\pm$ .001} \\
  $\EU_{\text{ent}}$ & \textbf{.895 $\pm$ .017} & \textbf{.969 $\pm$ .004} & .760 $\pm$ .026 & .998 $\pm$ .002 \\
  \bottomrule
  \end{tabular}
}
\label{table:ood1}
\end{table}

\section{Concluding Remarks}
We introduced a novel approach to quantifying uncertainty in classification tasks, adopting a label-wise perspective that allows for reasoning about uncertainty at the level of individual classes. This can be beneficial for decision-making in situations where incorrect predictions have unequal consequences for different classes, and for deciding about the right course of action to reduce uncertainty. 
Addressing criticisms in the recent literature and problems of the commonly used information-theoretic (entropy-based) measures, we showed that our measures satisfy many desirable properties. 
We also presented empirical results highlighting the practical usefulness of these measures. Overall, we trust that this work is a step towards a more interpretable representation of uncertainty that will be beneficial for safety-critical applications.

Our approach to decomposing uncertainty in a label-wise manner is in direct correspondence with the one-vs-rest decomposition of a multinomial into several binary classification problems. An interesting idea for future work, therefore, is the use of other decomposition techniques. In any case, thanks to the binarization, our approach is amenable to a very broad class of uncertainty measures.  
Specifically, we presented a framework that is parameterized by a loss function (proper scoring rule) $\phi$. As another direction of future work, we plan to elaborate more deeply on the appropriate choice of this loss, and the effect it has on uncertainty quantification. 

\begin{acknowledgements} 

Yusuf Sale and Lisa Wimmer are supported by the DAAD program Konrad Zuse Schools of Excellence in Artificial Intelligence, sponsored by the Federal Ministry of Education and Research.
\end{acknowledgements}

\bibliography{references}

\newpage

\onecolumn

\title{Label-wise Aleatoric and Epistemic Uncertainty Quantification\\(Supplementary Material)}
\maketitle


\appendix

\section{Proofs}
\label{appendix:proofs}
\begin{prop}\label{corrigendum}
Let $Q^{\prime} \in \ksimplextwo$ be a mean-preserving spread of $Q \in \ksimplextwo$, i.e., let $\vec{\Theta} \sim Q$, and $\vec{\Theta}^\prime \sim Q^\prime$ be two random vectors such that we have $\vec{\Theta}^\prime \overset{d}{=} \vec{\Theta} + \vec{Z}$, for some random vector $\vec{Z}$ with $\mathbb{E}[\vec{Z} \given \sigma(\vec{\Theta})] = 0$ almost surely. \\[0.2cm]
Now, define 
\begin{align}
\EU(Q) = \mathbb{E}_Q[\kl(\vec{\Theta} \, \| \, \bar{\vec{\theta}})], 
\end{align}
where \(\kl(\vec{\Theta} \, \| \, \bar{\vec{\theta}})\) denotes the Kullback-Leibler (KL) divergence of \(\vec{\Theta}\) from its mean \(\bar{\vec{\theta}}\). Then the claim is that $\EU(Q') > \EU(Q).$    
\end{prop}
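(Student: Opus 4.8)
The plan is to reduce the statement to the familiar additive entropy decomposition and then settle it with a (conditional) Jensen argument. First I would record the algebraic identity obtained by expanding the KL divergence and using $\mathbb{E}_Q[\Theta_k]=\bar\theta_k$:
\[
\EU(Q)=\mathbb{E}_Q\!\Big[\sum_{k=1}^{K}\Theta_k\log_2\Theta_k\Big]-\sum_{k=1}^{K}\bar\theta_k\log_2\bar\theta_k=\TU(Q)-\AU(Q),
\]
where $\TU(Q)\defeq\ent(\bar{\vec\theta})$ and $\AU(Q)\defeq\mathbb{E}_Q[\ent(\vec\Theta)]$. Since $\ent$ takes values in $[0,\log_2 K]$ on $\ksimplex$, all expectations appearing here are finite, so no integrability concern arises. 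Next I would note that a mean-preserving spread leaves the mean unchanged: $\bar{\vec\theta}'=\mathbb{E}[\vec\Theta+\vec Z]=\bar{\vec\theta}$ because $\mathbb{E}[\vec Z]=\mathbb{E}[\mathbb{E}[\vec Z\mid\sigma(\vec\Theta)]]=0$. Hence $\TU(Q')=\TU(Q)$, and therefore
\[
\EU(Q')-\EU(Q)=\AU(Q)-\AU(Q')=\mathbb{E}_Q[\ent(\vec\Theta)]-\mathbb{E}_{Q'}[\ent(\vec\Theta')].
\]
So it remains to prove $\mathbb{E}_{Q'}[\ent(\vec\Theta')]<\mathbb{E}_Q[\ent(\vec\Theta)]$, i.e. that spreading strictly decreases the expected entropy.

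For the weak inequality I would work on the coupling space on which $\vec\Theta\sim Q$, the noise $\vec Z$ with $\mathbb{E}[\vec Z\mid\sigma(\vec\Theta)]=0$ a.s., and $\vec\Theta+\vec Z\overset{d}{=}\vec\Theta'$ are jointly defined (this coupling is precisely what Definition~\ref{def:shifts}(i) posits). The map $\ent$ is concave on $\ksimplex$, being a sum of the concave functions $t\mapsto -t\log_2 t$, so the conditional Jensen inequality gives
\[
\mathbb{E}\big[\ent(\vec\Theta+\vec Z)\,\big|\,\sigma(\vec\Theta)\big]\;\le\;\ent\big(\mathbb{E}[\vec\Theta+\vec Z\mid\sigma(\vec\Theta)]\big)\;=\;\ent(\vec\Theta)\qquad\text{a.s.},
\]
and taking expectations yields $\mathbb{E}_{Q'}[\ent(\vec\Theta')]\le\mathbb{E}_Q[\ent(\vec\Theta)]$, hence $\EU(Q')\ge\EU(Q)$.

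The main obstacle is upgrading this to the strict inequality, and this is where the nondegeneracy clause $\max_k\Var(Z_k)>0$ built into Definition~\ref{def:shifts}(i) must be used. I would exploit that $t\mapsto -t\log_2 t$ is \emph{strictly} concave on all of $[0,1]$ (its second derivative is $-1/(t\ln 2)<0$ on $(0,1]$, and strict concavity extends to the endpoint $0$ by continuity), so each coordinate summand of $\ent$ is strictly concave. Fix $k^\ast$ with $\Var(Z_{k^\ast})>0$. Since $\mathbb{E}[Z_{k^\ast}\mid\sigma(\vec\Theta)]=0$, the law of total variance gives $\Var(Z_{k^\ast})=\mathbb{E}[\Var(Z_{k^\ast}\mid\sigma(\vec\Theta))]$, so the event $A\defeq\{\Var(Z_{k^\ast}\mid\sigma(\vec\Theta))>0\}\in\sigma(\vec\Theta)$ has $P(A)>0$. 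On $A$ the (regular) conditional law of $Z_{k^\ast}$ given $\sigma(\vec\Theta)$ is nondegenerate, and since $\Theta_{k^\ast}$ is $\sigma(\vec\Theta)$-measurable, so is that of $\Theta_{k^\ast}+Z_{k^\ast}\in[0,1]$; strict concavity of $t\mapsto -t\log_2 t$ then turns the $k^\ast$-th summand of the conditional Jensen step into a strict inequality on $A$, while every other summand stays $\le$. Integrating the conditional inequality over $\Omega$ — strict on the positive-measure set $A$ — gives $\mathbb{E}_{Q'}[\ent(\vec\Theta')]<\mathbb{E}_Q[\ent(\vec\Theta)]$, hence $\EU(Q')>\EU(Q)$. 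The only technical points to dispatch carefully are the existence of the coupling and of a regular conditional distribution on the (standard Borel) simplex, and the elementary fact that a strictly concave function has strictly smaller expectation than its value at the mean for any nondegenerate distribution — both standard.
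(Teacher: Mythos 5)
Your proof is correct, and its core is the same conditional Jensen argument the paper uses: both rest on $\mathbb{E}[\vec{\Theta}'\given\sigma(\vec{\Theta})]=\vec{\Theta}$ combined with strict convexity/concavity of the relevant integrand, followed by the tower property. The difference in route is cosmetic: the paper applies Jensen directly to the convex map $\vec{\theta}\mapsto\kl(\vec{\theta}\,\|\,\bar{\vec{\theta}})$, whereas you first rewrite $\EU=\TU-\AU$, note that a mean-preserving spread fixes $\TU$, and apply Jensen to the concave entropy; since $\kl(\vec{\theta}\,\|\,\bar{\vec{\theta}})=-\ent(\vec{\theta})-\sum_k\theta_k\log_2\bar{\theta}_k$ differs from $-\ent(\vec{\theta})$ only by a term linear in $\vec{\theta}$, the two Jensen gaps are literally identical. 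Where your write-up is genuinely sharper is the strictness step: the paper asserts the strict conditional Jensen inequality almost surely, which is only valid if the conditional law of $\vec{Z}$ given $\sigma(\vec{\Theta})$ is nondegenerate almost surely --- something Definition~\ref{def:shifts}(i) does not guarantee ($\vec{Z}$ could vanish on part of the space). Your version --- weak inequality everywhere, strict on the positive-probability event where $\Var(Z_{k^\ast}\given\sigma(\vec{\Theta}))>0$ (which exists by the law of total variance and $\max_k\Var(Z_k)>0$), then integrate --- is the correct way to obtain $\EU(Q')>\EU(Q)$, and is how the paper's own argument should be read.
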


\begin{proof}
First, note that $\kl(\vec{\Theta} \, \| \, \bar{\vec{\theta}})$ is a convex function of $\vec{\Theta}$ since $\bar{\vec{\theta}} \in \ksimplex$ is a constant. Given that $\vec{\Theta}^{\prime} \overset{d}{=} \vec{\Theta} + \vec{Z}$ and $\mathbb{E}[\vec{Z} \given \sigma(\vec{\Theta})] = 0$ almost surely, it follows that $\mathbb{E}_{Q^{\prime}}[\vec{\Theta}^{\prime} \mid \sigma(\vec{\Theta})] = \vec{\Theta}$.\\[0.1cm]
Jensen's inequality states that for a strict convex function \(f\) and a non-constant random vector $\vec{Y}$, we have $\mathbb{E}[f(\vec{Y})] > f(\mathbb{E}[\vec{Y}])$. Then Jensen's inequality implies:
\begin{align*}
\mathbb{E}_{Q'}[\kl(\vec{\Theta}' \| \bar{\vec{\theta}}) \given \sigma(\vec{\Theta})] > \underbrace{\kl(\mathbb{E}_{Q'}[\vec{\Theta}'\given \sigma(\vec{\Theta})] \, \| \, \bar{\vec{\theta}})}_{= \kl(\vec{\Theta} \, \| \, \bar{\vec{\theta}})} \quad \textnormal{a.s}.
\end{align*}
By this we get 
\begin{align}
    \mathbb{E}_{Q'}[\kl(\vec{\Theta}' \, \| \,\bar{\vec{\theta}})] = \mathbb{E}_Q[\mathbb{E}_{Q'}[\kl(\vec{\Theta}' \, \| \, \bar{\vec{\theta}}) \given \sigma(\vec{\Theta})]] > \mathbb{E}_{Q}[\kl(\vec{\Theta} \, \| \, \bar{\vec{\theta}})],
    \label{ineq:mps}
\end{align}
 by the law of total expectation. This concludes the proof. 
\end{proof}

\newpage
\begin{proof}[Proof of Theorem \ref{thm:entropy_axioms}]
We prove that the entropy-based uncertainty measures \eqref{tu:label_entropy}, \eqref{au:label_entropy}, and \eqref{eu:label_entropy} satisfy Axioms A0, A1, A2 (only for $\TU$), A3 (strict version), A4 (strict version, only for $\TU$), A6, and A7 of Section \ref{subsection:axioms}. 
\begin{itemize}
    \item[A0:] This property holds true since entropy $\ent(\cdot)$ and KL-divergence $\kl(\cdot \, || \, \cdot)$ are non-negative.
    \item[A1:] 
    Let $Q = \delta_{\vtheta} \in \ksimplextwo$ be a Dirac measure on $\vtheta \in \ksimplex$. 
    Since $\kl(\theta_k \, || \, \bar{\theta}_k)] = 0$ if and only if $\theta_k = \bar{\theta}_k$ and consequently $\mathbb{E}_{Q_k}[\kl(\Theta_k \, || \, \bar{\theta}_k)] = 0$  for all $k \in \{1, \dots, K\}$ the claim holds true.
    \item[A2:] 
    First we show that the function $U(\vtheta) = \sum_{k=1}^K \ent(\theta_k)$ is strictly concave on $\Delta_K$ with unique maximizer $\vec{\beta} = (1/K,\dots,1/K)$. We observe that 
    \begin{align*}
    \nabla^2 U(\vec{\theta}) = 
    \begin{bmatrix}
    -\left[\frac{\log(2)}{\theta_1} + \frac{\log(2)}{1-\theta_1}\right] & 0 & \cdots & 0 \\
    0 & -\left[\frac{\log(2)}{\theta_2} + \frac{\log(2)}{1-\theta_2}\right] & \cdots & 0 \\
    \vdots & \vdots & \ddots & \vdots \\
    0 & 0 & \cdots & -\left[\frac{\log(2)}{\theta_K} + \frac{\log(2)}{1-\theta_K}\right]
    \end{bmatrix}
    \end{align*}
    is negative definite.
    To find the maximizer, we consider the Lagrangian dual:
    \begin{align*}
        \max_{\vtheta\in\Delta_k, \lambda} U^*(\vtheta, \lambda) = \max_{\vtheta\in\Delta_k, \lambda} U(\vtheta) + \lambda \left(\sum_{k=1}^K\theta_k - 1 \right).  
    \end{align*}
    The first-order conditions are 
        \begin{align*}
        \frac{\partial U^*(\vtheta, \lambda)}{\partial\theta_k} = \log_2(\theta_k) - \log_2(1-\theta_k) + \lambda = 0, \, k \in \{1,...,K\} \, \text{and} \, \sum_{k=1}^K\theta_k = 1,
    \end{align*}
    which are solved by $\vtheta = (1/K,\dots,1/K)$ and $\lambda = \log_2  \frac{K-1}{K} - \log_2 \frac{1}{K}$. This implies that $\TU$ is maximized for $Q \in \ksimplextwo$ such that $\mathbb{E}[\Theta_k] = 1/K$ for all $k\in\{1,\dots,K\}$. The latter holds true for $Q$ being the uniform distribution on $\ksimplextwo$.
    
    Thus, for any $Q \in \ksimplextwo$ satisfying $\mathbb{E}[\Theta_k] = 1/K$ for all $k \in \{1,\dots,K\}$ we obtain $$\TU(Q) = \log_2(K) + (K-1)\log_2(K/(K-1)).$$
    It is easy to show that the maximum of $\EU$ aligns with that of $\TU$. Assume, for the sake of argument, that $\EU$ is maximal for $Q_{\mathrm{Unif}}$ being the uniform distribution on $\ksimplextwo$. Since $\TU$ decomposes additively in $\AU$ and $\EU$, it follows that $\AU(Q_{\mathrm{Unif}}) = 0$. However, given that $\AU(Q_{\mathrm{unif}}) > 0$, this leads to a contradiction, demonstrating that $\EU$ cannot be maximal for $Q_{\mathrm{Unif}}$.
    \item[A3:]  Let $Q^{\prime} \in \ksimplextwo$ be a mean-preserving spread of $Q \in \ksimplextwo$, i.e., let $\vec{\Theta} \sim Q, \vec{\Theta}^\prime \sim Q^\prime$ be two random vectors such that $\vec{\Theta}^\prime \overset{d}{=} \vec{\Theta} + \vec{Z}$, for some random vector $\vec{Z}$ with $\mathbb{E}[\vec{Z} \given \sigma(\vec{\Theta})] = 0$ almost surely. Applying Proposition \ref{corrigendum} yields $\mathbb{E}_{Q_k^{\prime}}[\kl(\Theta_k^{\prime} \, \| \, \bar{\theta}_k)] > \mathbb{E}_{Q_k}[\kl(\Theta_k \, \| \, \bar{\theta}_k)]$ for all $k \in \{1,\dots,K\}$
    and with that $\EU(Q') > \EU(Q)$. 
    
    Since we have by definition $\TU(Q') = \sum_{k=1}^K \ent(\mathbb{E}_{Q'_k}[\Theta'_k])$, and $\mathbb{E}_{Q_k^{\prime}}[\Theta_k^{\prime}] = \mathbb{E}_{Q_k}[\Theta_k]$ for all $k \in \{1,\dots,K\}$ by the mean-preserving spread assumption, $\TU(Q') = \TU(Q)$ follows. 
    \item[A4:] Let $Q'$ be a spread-preserving center shift of $Q$ such that $\mathbb{E}[\vec{\Theta}'] = \lambda\mathbb{E}[\vec{\Theta}] + (1-\lambda)(1/K,\dots,1/K)^\top$ for some $\lambda \in (0,1)$. Because also $\vec{\Theta}' = \vec{\Theta} + \vec{z}$ with $\vec{z} \neq 0$, this implies $\mathbb{E}[\vec{\Theta}] \neq (1/K, \dots, 1/K)^\top$. From the proof of A2 we know that $(1/K,\dots,1/K)$ maximizes $\sum_{k=1}^K \ent(\theta_k)$ and concavity of $\ent(\cdot)$ implies $\TU(Q') > \TU(Q)$.

    \item[A6:] 
    Let $\delta_m \in \Delta_{\delta_m}$, such that we have 
    \begin{align*}
    \AU(\delta_m) &= \sum_{k=1}^{K} \mathbb{E} [\ent(Y_k \given \Theta_k)] \\[0.2cm]
    &= \sum_{k=1}^{K} \sum_{y_k \in \{0,1\}} \ent(\delta_{y_k}) \lambda_{y_k}(\delta_{y_k}) \\[0.2cm]
    &= 0,
    \end{align*}
    where the last equation holds true, since $\ent(\delta_{y_k}) = 0$ for all $y_k \in \{0,1\}$ and $k \in \{1,\dots,K\}$.
    \item[A7:] Let $Q \in \ksimplextwo$ and denote by $Q_{|\lab_1}$ and $Q_{|\lab_2}$ the corresponding marginalized distribution, where $\lab_1$ and $\lab_2$ are partitions of $\lab$. It holds 
    \begin{align*}
        \TU_{\lab}(Q) = \sum_{k \in \lab} \ent(\mathbb{E}_{Q_k}[\Theta_k]) &= \sum_{k \in \lab_1} \ent(\mathbb{E}_{Q_k}[\Theta_k]) + \sum_{k \in \lab_2} \ent(\mathbb{E}_{Q_k}[\Theta_k]) \\[0.2cm]
        &= \TU_{\lab_1}(Q_{|\lab_1}) + \TU_{\lab_2}(Q_{|\lab_2}),
    \end{align*}
    similarly the same holds for $\AU$. Due to the additive decomposition the claim is also true for $\EU$.
\end{itemize}
This concludes the proof.
\end{proof}

\begin{proof}[Proof of Theorem \ref{thm:axioms}]
We prove that variance-based uncertainty measures \eqref{tu:variance}, \eqref{au:variance}, and \eqref{eu:variance} satisfy Axioms A0, A1, A2 (only for $\TU$), A3 (strict version), A4 (strict version) and A5--A7 of Section \ref{subsection:axioms}.
\begin{itemize}
    \item[A0:] This property holds trivially true. 
    \item[A1:] Let $Q = \delta_{\vtheta} \in \ksimplextwo$ be a Dirac measure on $\vtheta \in \ksimplex$. Then $\EU[\delta_{\vtheta}] = 0$ holds trivially true, since $\Var_{Q_k}[\Theta_k] = 0$ for all $k \in \{1, \dots, K\}$. The other direction follows similarly. 
    \item[A2:] First we show that the function $V(\vec{\theta}) = \sum_{k = 1}^K \theta_k(1 - \theta_k)$ is strictly concave on $\Delta_K$ with unique maximizer $\vec{\beta} = (1/K,\dots,1/K)$.  It holds 
    \begin{align*}
    \nabla^2 V(\vec{\theta}) = -2 \mathrm{diag}(1, \dots, 1 ),
    \end{align*}
    which is negative definite. To find the maximizer, we consider the Lagrangian dual:
    \begin{align*}
        \max_{\vec{\theta} \in \Delta_K, \lambda} V^*(\vec{\theta}, \lambda) =  \max_{\vec{\theta} \in \Delta_K, \lambda} V(\vec{\theta}) + \lambda \biggl(\sum_{k = 1}^K \theta_k - 1\biggr).
    \end{align*}
    The first-order conditions are
    \begin{align*}
        \frac{\partial  V^*(\vec{\theta}, \lambda)}{\partial \theta_k} = 1 - 2 \theta_k + \lambda = 0,\; k \in \{1, \dots, K\} \; \text{and} \;
        \sum_{k = 1}^K \theta_k = 1,
    \end{align*}
    which are solved by $\vec{\theta} = \vec{\beta}$ and $\lambda = -(K - 2) / K$. 
    This implies that $\TU$ is maximized for any $Q \in \ksimplextwo$ such that $\mathbb{E}[\Theta_k] = 1/K$ for all $k\in\{1,\dots,K\}$. The latter holds true for $Q$ being the uniform distribution on $\ksimplextwo$.

    The proof that $\EU$ is not maximal for $Q_{\mathrm{Unif}}$ being the uniform distribution on $\ksimplextwo$ is completely analogous to the proof of A3 in Theorem \ref{thm:entropy_axioms}.
    \item[A3:] Let $Q^{\prime} \in \ksimplextwo$ be a mean-preserving spread of $Q \in \ksimplextwo$, i.e., let $\vec{\Theta} \sim Q, \vec{\Theta}^\prime \sim Q^\prime$ be two random vectors such that $\vec{\Theta}^\prime \overset{d}{=} \vec{\Theta} + \vec{Z}$, for some random vector $\vec{Z}$ with $\mathbb{E}[\vec{Z} \given \sigma(\vec{\Theta})] = 0$ almost surely. Then, we have the following:
    \begin{align}
    \EU(Q^{\prime}) &=  \sum_{k=1}^{K} \Var(\Theta_k + Z_k) \\
           &= \sum_{k=1}^{K} \Var(\Theta_k) + \Var(Z_k)+ 2 \Cov(\Theta_k, Z_k)  \\
           &= \EU(Q) + \sum_{k=1}^{K}  \Var(Z_k) 
           \label{mps:ineq} \\
           &> \EU(Q) \label{mps:ineq2}
    \end{align}
    Note that the equality \eqref{mps:ineq} holds true, since we know that $\Cov(\Theta_k, Z_k) = 0$. To see this, observe that we have $\mathbb{E}[\Theta_k  Z_k] = \mathbb{E}[\mathbb{E}[\Theta_k Z_k \given \sigma(\Theta_k)]] = 0$ due to the mean-preserving spread assumption. Similarly, we know that $\mathbb{E}[Z_k] = 0$, such that we have $\Cov(\Theta_k, Z_k) = \mathbb{E}[\Theta_k  Z_k] - \mathbb{E}[\Theta_k] \mathbb{E}[Z_k] = 0.$ The inequality \eqref{mps:ineq2} is strict since by assumption $\max_k \Var(Z_k) > 0$. 

    Since we have $\mathbb{E}[\vec{\Theta}] = \mathbb{E}[\vec{\Theta}^{\prime}]$ by assumption, the weak version of A3 holds for $\TU$. 
    \item[A4:] Let $Q'$ be a spread-preserving location shift of $Q$ such that $\mathbb E[\vec{\Theta}'] = \lambda \mathbb E[\vec{\Theta}] + (1 - \lambda) (1/K, \dots, 1/K)^\top$ for some $\lambda \in (0, 1)$. From the proof of A2 we know that $\vec{\beta} = (1/K,\dots,1/K)$ maximizes $\sum_{k=1}^{K} \theta_k(1-\theta_k)$, which in turn immediately implies $\TU(Q') > \TU(Q)$. The inequality for AU is then implied by $\mathrm{EU}(Q') = \mathrm{EU}(Q)$ (Axiom A5).
    \item[A5:] Let $\vec{\Theta} \sim Q$, and $(\vec{\Theta} + \vec{z}) \sim Q^{\prime}$, where $\vec{z} \neq \vec{0}$ is a constant. Then, we observe
\begin{align*}
    \EU(Q^{\prime}) &= \sum_{k = 1}^{K} \Var(\Theta_k + z_k) \\[0.1cm]
    &= \sum_{k = 1}^{K}  \mathbb{E}[((\Theta_k + z_k) - \mathbb{E}[\Theta_k + z_k] )^2 ] \\[0.1cm]
    &= \sum_{k = 1}^{K} \mathbb{E}[(\Theta_k - \mathbb{E}[\Theta_k])^2] \\[0.1cm]
    &= \EU(Q). 
\end{align*}
    \item[A6:] With $\delta_m \in \Delta_{\delta_m}$ we have 
    \begin{align*}
        \AU(\delta_m) &= \sum_{k = 1}^{K} \mathbb{E}[\Theta_k  (1 - \Theta_k)] \\[0.2cm]
        &= \sum_{k = 1}^{K} \lambda_k  (1 - 1) + (1 - \lambda_k)  (0 - 0) \\[0.2cm]
        &= 0.
    \end{align*}
    \item[A7:] Let $Q \in \ksimplextwo$ and further denote by $Q_{|\lab_1}$ and $Q_{|\lab_2}$ the corresponding marginalized distribution, where $\lab_1$ and $\lab_2$ are partitions of $\lab$. It holds 
    \begin{align*}
        \TU_{\lab}(Q) = \sum_{k \in \lab}  \Var(Y_k) &= \sum_{k \in \lab_1} \Var(Y_k) + \sum_{k \in \lab_2} \Var(Y_k) \\[0.2cm]
        &= \TU_{\lab_1}(Q_{|\lab_1}) + \TU_{\lab_2}(Q_{|\lab_2}),
    \end{align*}
    similarly the same holds for $\AU$. Due to the additive decomposition the claim is also true for $\EU$.
\end{itemize}
This concludes the proof.
\end{proof}

\newpage
\section{Experimental Details}
\label{appendix:exp_details}
In this section, we provide a detailed overview of the experimental setup to allow reproduction of the results.

The experimental code is written in \texttt{Python 3.9} using the \texttt{PyTorch} \citep{pytorch2019} library. 

\subsection{Experiments on ML Data sets}
\paragraph{Data sets. }For all data sets, we use the respective dedicated train-test splits. We only use pre-processing for the CIFAR10 data set. Each image is normalized using the mean and standard deviation per channel of the training set. Additionally, the training images are cropped randomly (while adding 4 pixels of padding on every border and randomly flipped horizontally). 
\paragraph{Ensembles. }The ensembles are built using two base models: a Convolutional Neural Network (\texttt{CNN}) and a \texttt{ResNet18} \citep{heResnet2016}. The \texttt{CNN} has two convolutional layers followed by two fully connect layers. The convolutional layers have $32$ and $64$ filters of $5$ by $5$ and the fully connected layers have $512$ and $10$ neurons, respectively. The layers have \texttt{ReLU} activations and the last layer uses a softmax function to output probabilities. The \texttt{ResNet18} model has a fully connected last layer of $10$ units and a softmax function to generate probabilities for $10$ classes. The output of the ensemble is generated by averaging over the outputs of the individual ensemble members.

\subsubsection*{Accuracy-Rejection Curves}
We train $5$ \texttt{CNNs} on FMNIST, MNIST and KMNIST and $5$ \texttt{ResNets} on CIFAR10 and SVHN. We use \texttt{Adam} \citep{kingmaAdam2015} with the default hyper-parameters to train the \texttt{CNNs} in $10$ epochs for MNIST and $20$ epochs for FMNIST and KMNIST using a batch size of $256$. We train the \texttt{ResNets} using stochastic gradient descent with weight decay set to $10^{-4}$, momentum at $0.9$, and a learning rate of $0.1$, setting the learning rate to $0.001$ at epoch $20$ and to $0.0001$ at epoch $25$. The models are trained for $30$ epochs in total. 
The ARCs are then generated using the test set.

\subsubsection*{Out-of-Distribution Detection}
We train $5$ \texttt{CNNs} on FashionMNIST and $5$ \texttt{ResNets} on CIFAR10, using the same setup as for the ARCs. Epistemic uncertainty is computed on the test sets of the corresponding data sets without applying any data augmentation. 

\subsection{Experiments on Medical Images}

\paragraph{Data set.} We use a publicly available PET/CT image data set \citep{Datasetgatidis2022whole} with respective dedicated train-test splits. During the preprocessing, we extract 2D images as coronal slices from the 3D PET and CT image volumes. Furthermore, each image is resized to $400 \times 400$ pixels, normalized using the mean and standard deviation per channel of the training set and stacked into a three channel image consisting of a PET, CT, and fusion channel. The final data set consists of $96000$ 2D images coming from $1014$ patients.
\paragraph{Ensembles. }The ensembles are built using a \texttt{ResNet50} \citep{heResnet2016}. The \texttt{ResNet50} model has a fully connected last layer of $4$ units and a softmax function to generate probabilities for $4$ classes. The output of the ensemble is generated by averaging over the outputs of the individual ensemble members. We use Adam \citep{kingmaAdam2015} to train each model with a cross entropy loss for $40$ epochs with a learning rate of $0.001$ and a batch size of $50$. All models are evaluated on a separate test set.

\subsubsection*{Accuracy-Rejection Curves}
\label{app:arcmed}
We train $5$ \texttt{ResNets} on the PET/CT image data set. We use Adam \citep{kingmaAdam2015} to train each model with a cross entropy loss for $40$ epochs with a learning rate of $0.001$ and a batch size of $50$. The ARCs are then generated using the test set.

\newpage
\section{Additional Results}
\label{appendix:exp}
In this section, we report on experiments that we perform in addition to the ones presented in the main paper.
\subsection{Label-wise Uncertainties}
\label{app:label}
Figure \ref{fig:medical_examples} presents additional medical images with the highest total, aleatoric, and epistemic uncertainties. Similarly, Figure \ref{fig:tu_add} showcases images with the highest total, aleatoric, and epistemic uncertainties for the MNIST data set. 

\begin{figure}[h!]
		\centering
		\includegraphics[width=0.9\linewidth]{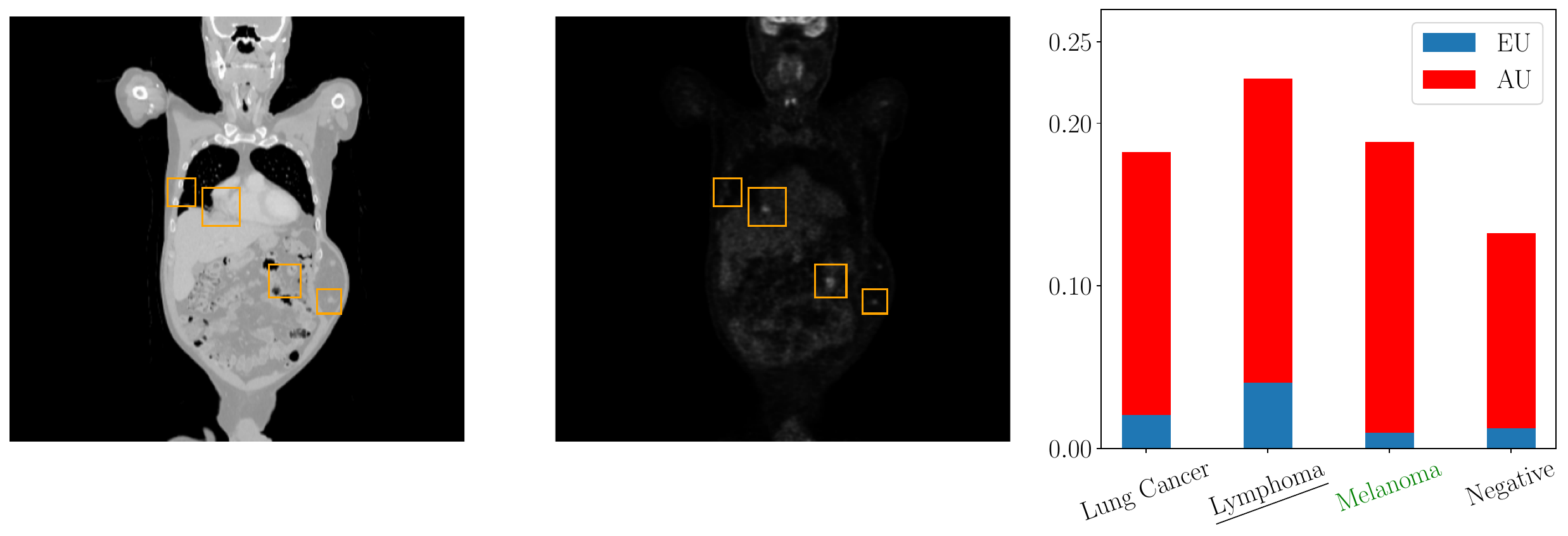}
  		\includegraphics[width=0.9\linewidth]{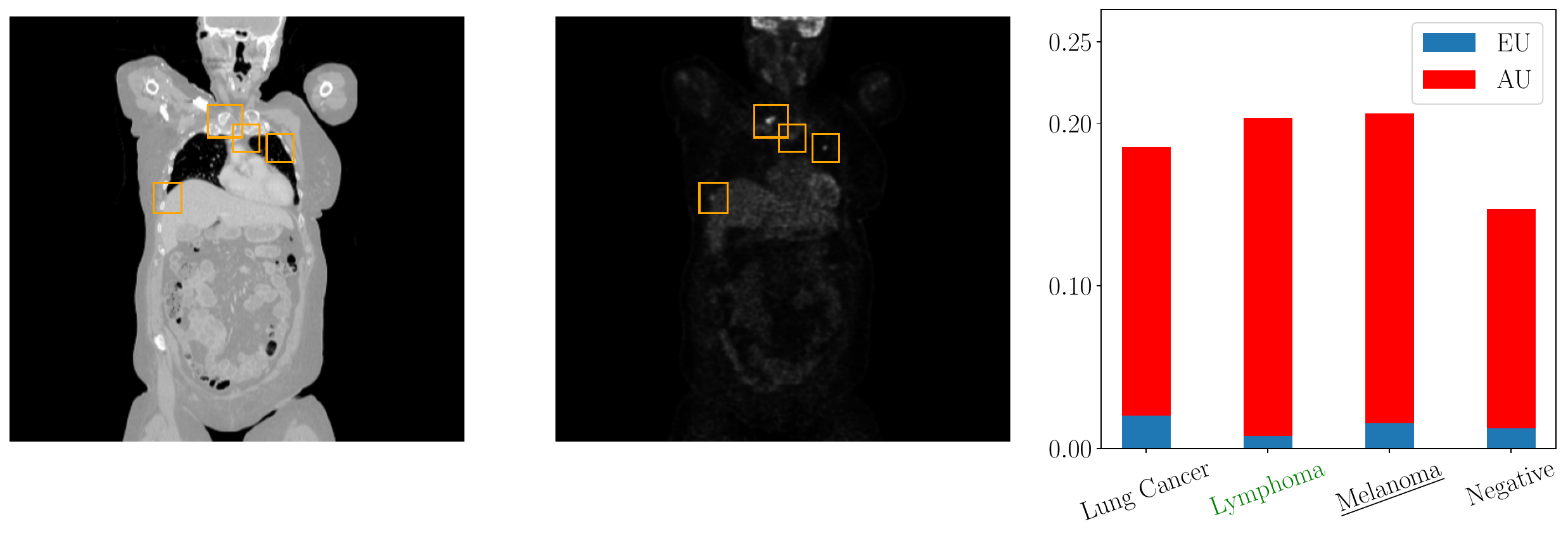}
  		\includegraphics[width=0.9\linewidth]{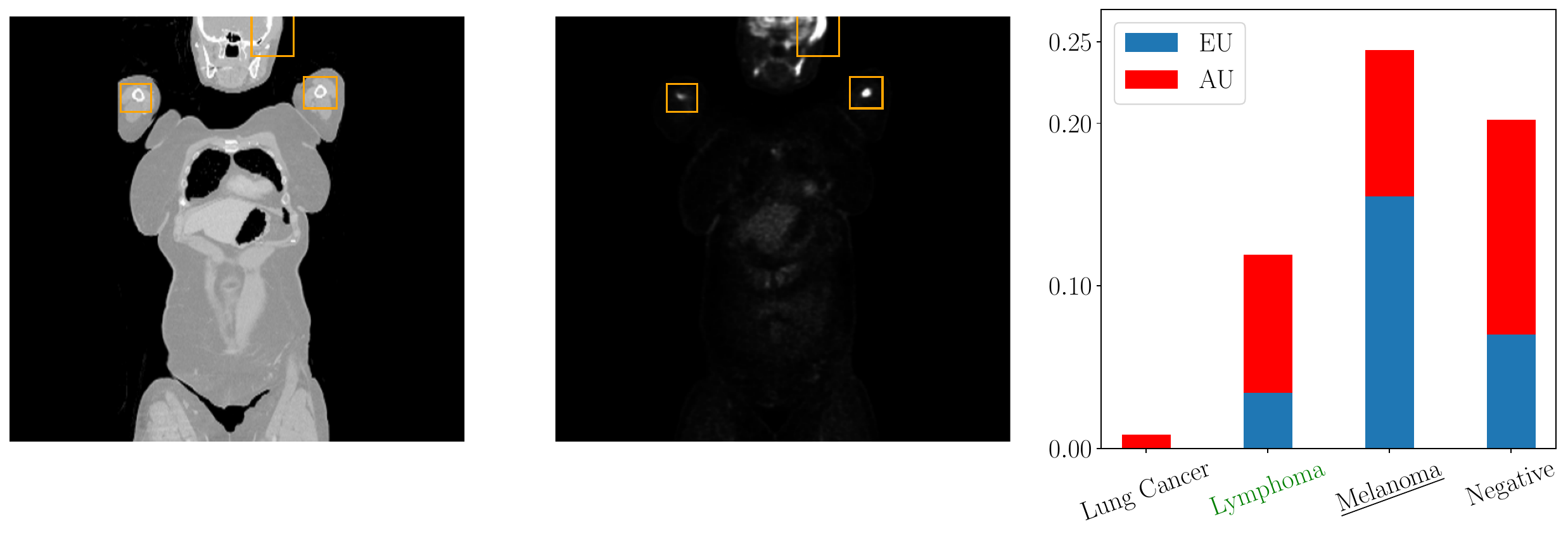}
		\caption{Medical image examples with highest total \textit{(top)}, \textit{\textcolor{PythonRed}{
        aleatoric}} \textit{(middle)} and \textit{\textcolor{PythonBlue}{epistemic}} uncertainties \textit{(bottom)} along with their corresponding label-wise uncertainties, \textit{\textcolor{Green}{ground truth class}} and \textit{\underline{predicted class}}.}
		\label{fig:medical_examples}
\end{figure}

\newpage 
\begin{figure}[htbp]
    \centering

    \begin{minipage}{\textwidth}
        \centering
        \begin{subfigure}[b]{0.24\textwidth}
            \includegraphics[width=\textwidth]{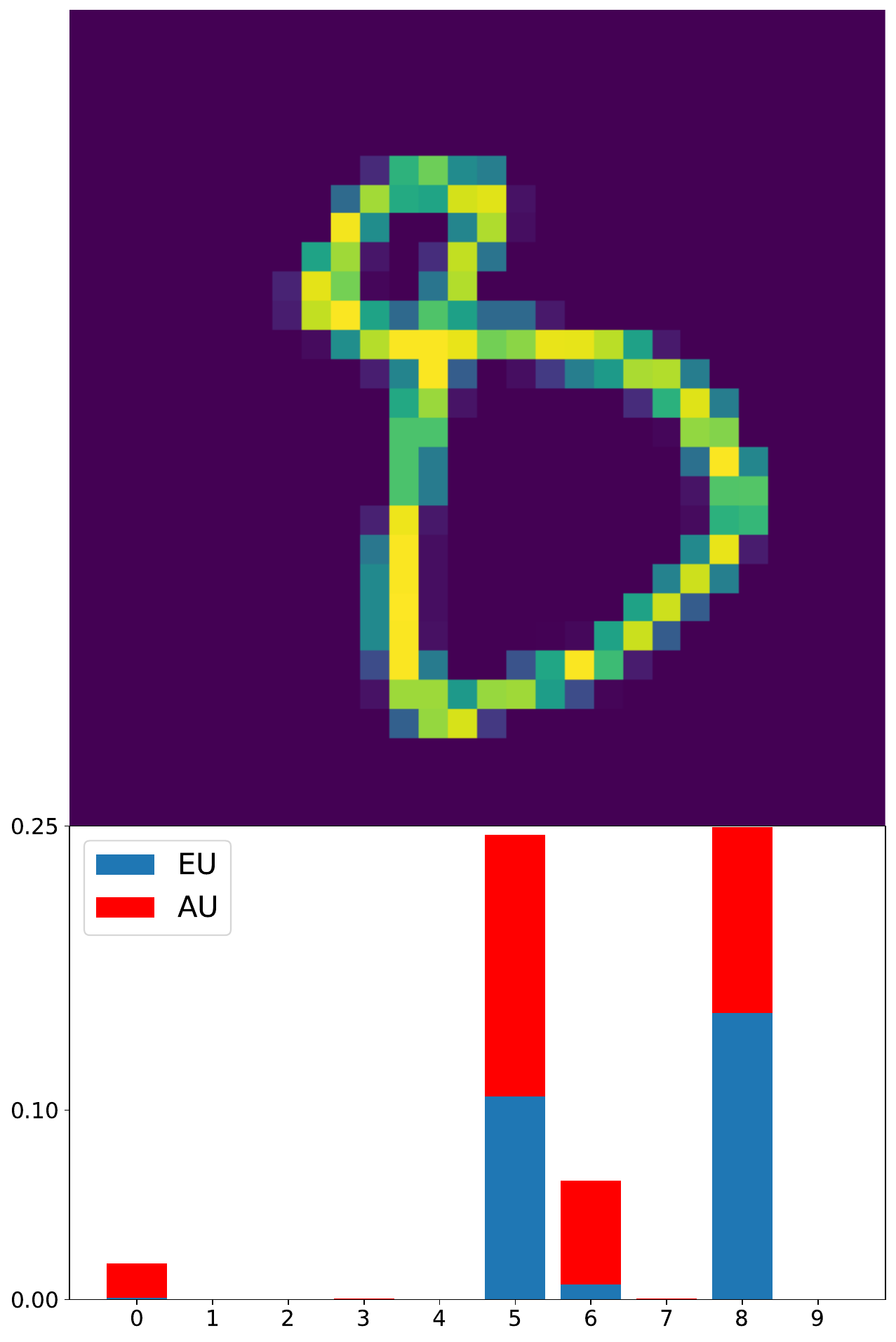}
        \end{subfigure}
        \hspace{1.5cm}
        \begin{subfigure}[b]{0.24\textwidth}
            \includegraphics[width=\textwidth]{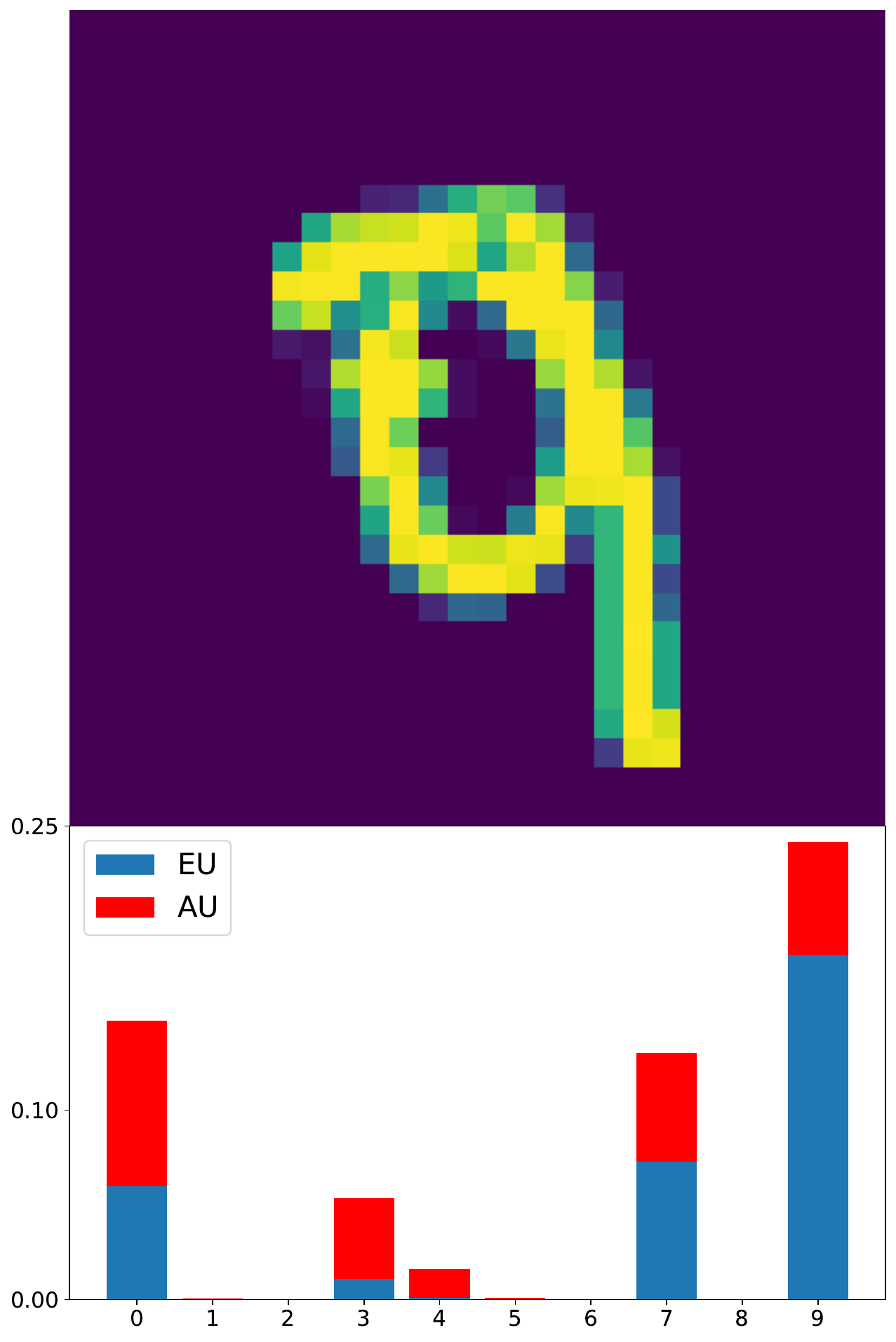}
        \end{subfigure}
        \hspace{1.5cm}
        \begin{subfigure}[b]{0.24\textwidth}
            \includegraphics[width=\textwidth]{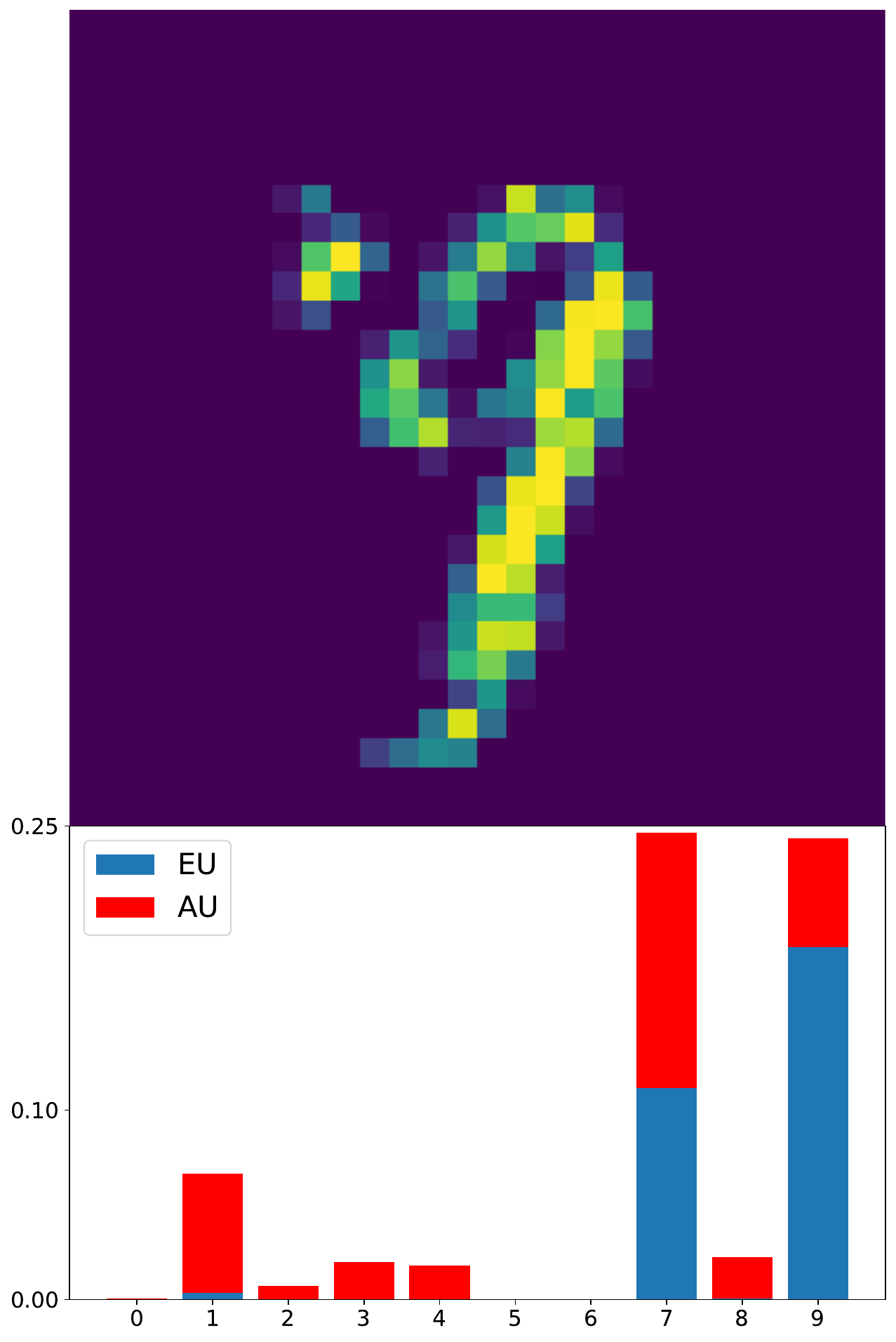}
        \end{subfigure}
    \end{minipage}

    \vspace{0.5cm} 

    \begin{minipage}{\textwidth}
        \centering
        \begin{subfigure}[b]{0.24\textwidth}
            \includegraphics[width=\textwidth]{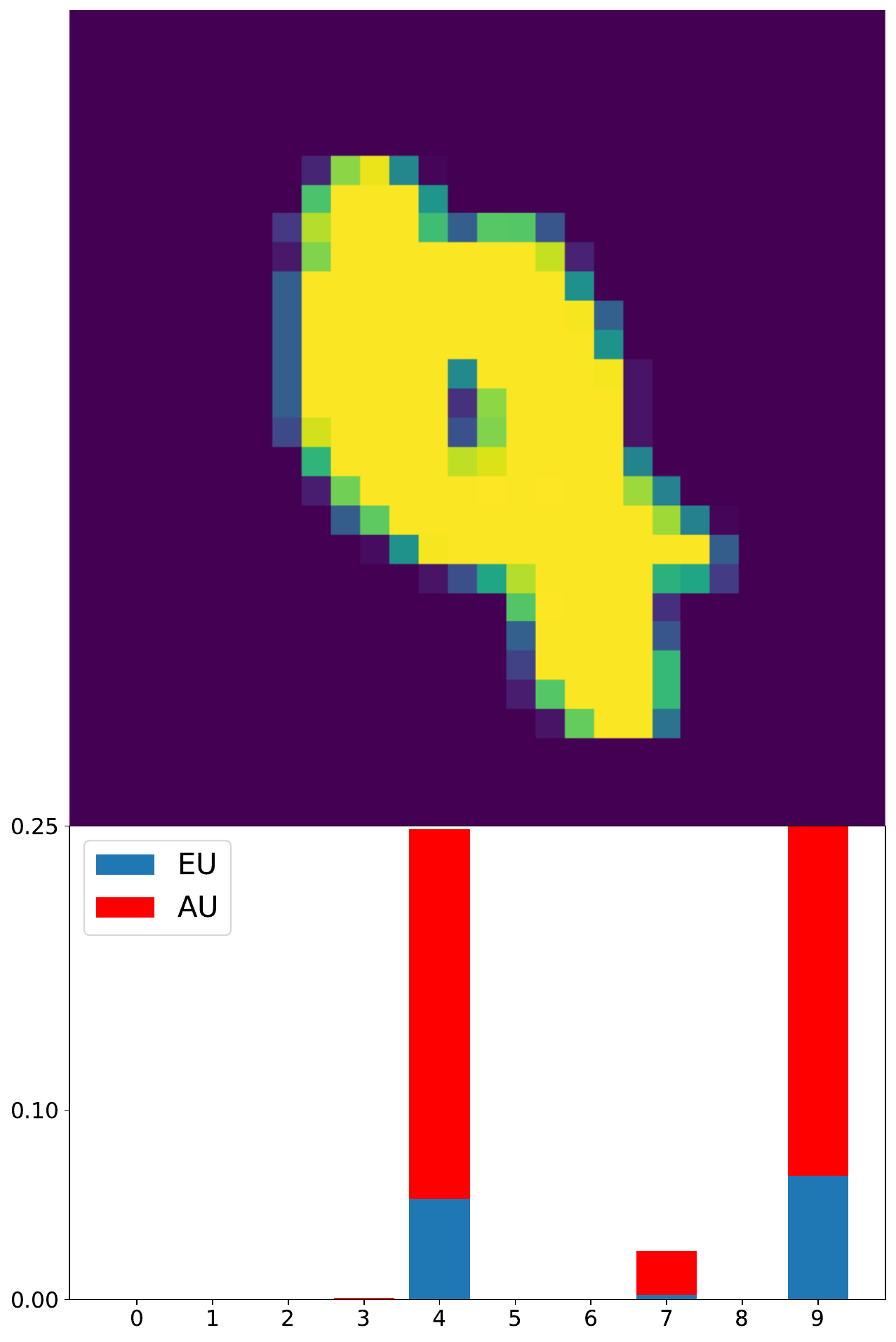}
        \end{subfigure}
        \hspace{1.5cm}
        \begin{subfigure}[b]{0.24\textwidth}
            \includegraphics[width=\textwidth]{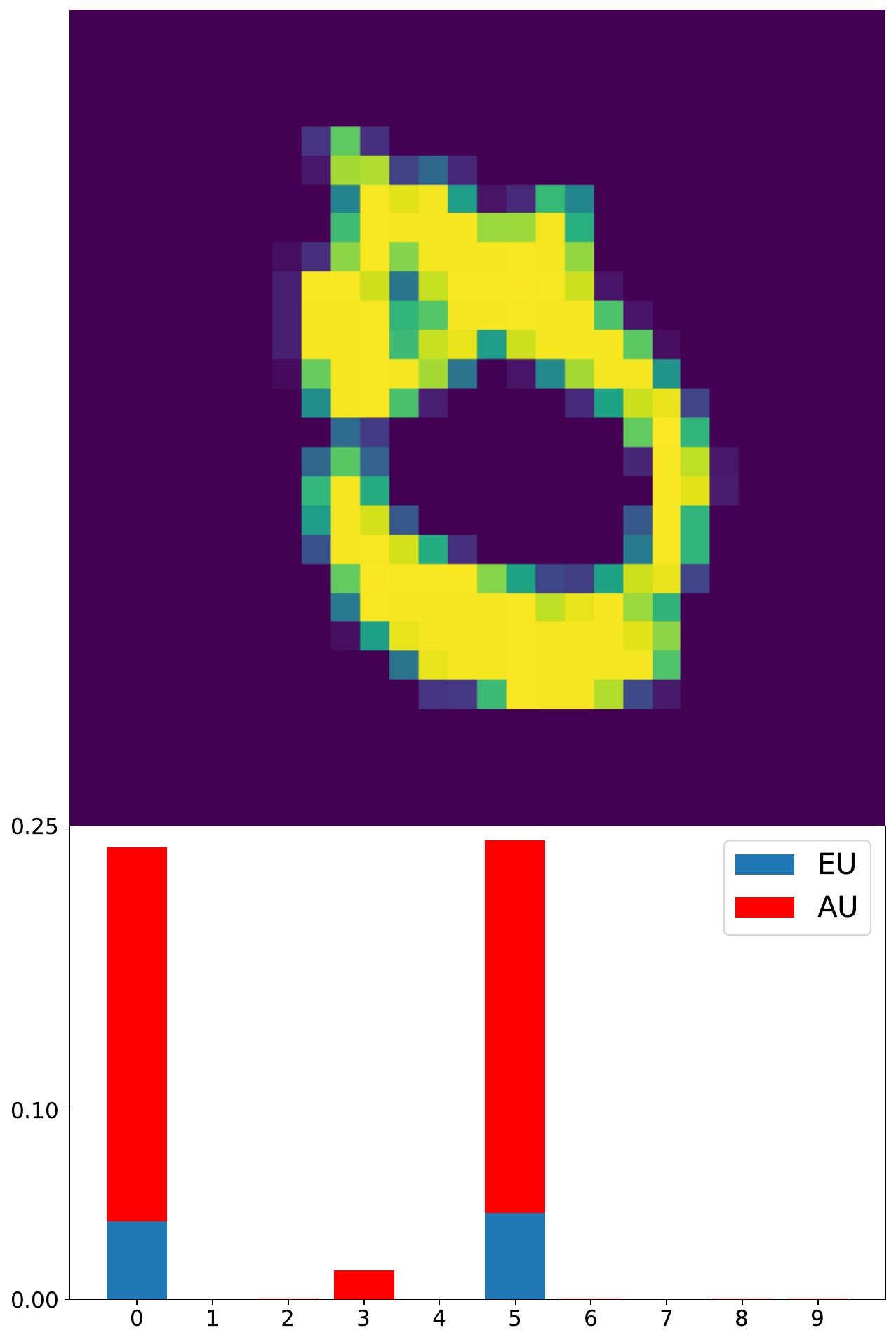}
        \end{subfigure}
        \hspace{1.5cm}
        \begin{subfigure}[b]{0.24\textwidth}
            \includegraphics[width=\textwidth]{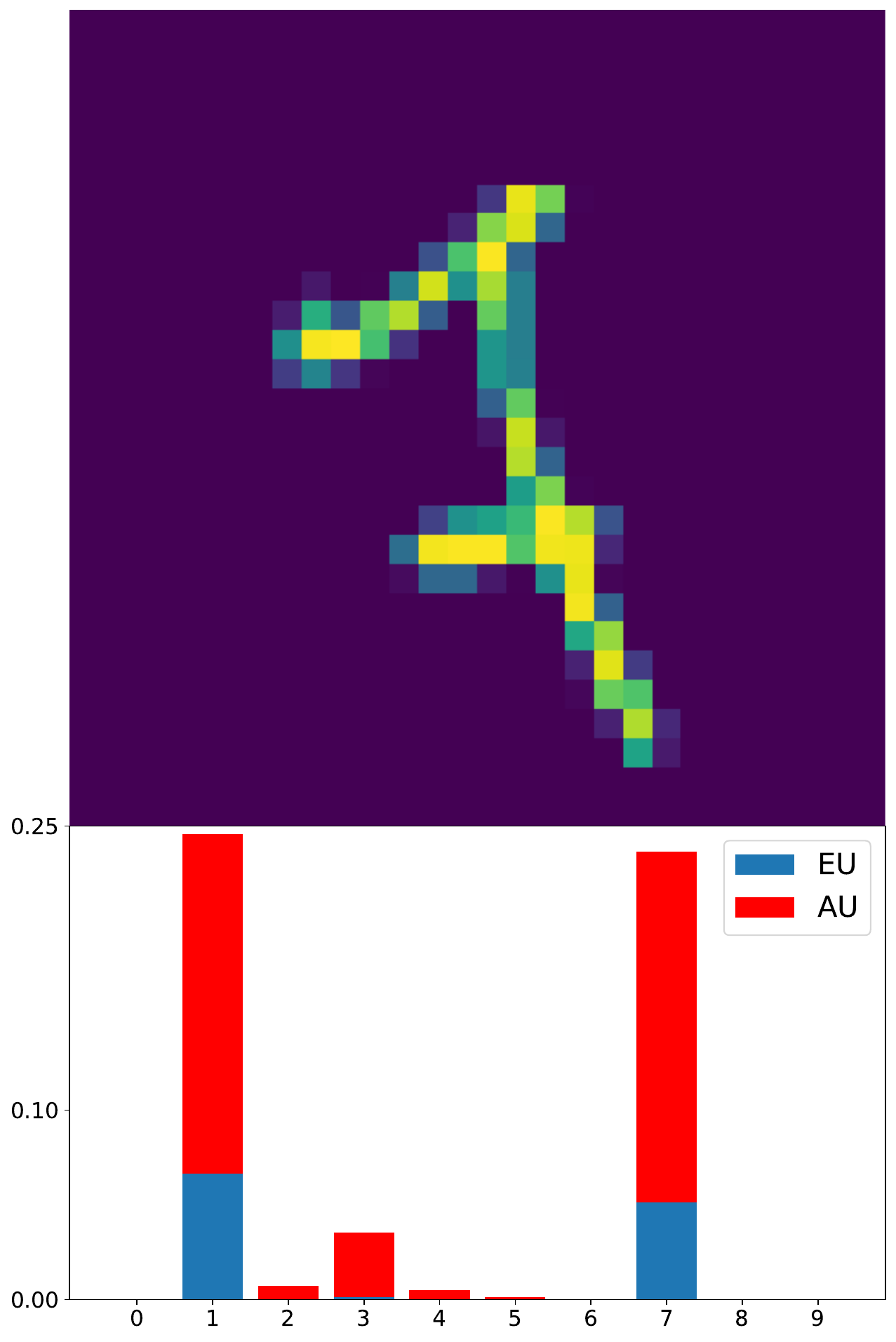}
        \end{subfigure}
    \end{minipage}

    \vspace{0.5cm} %

    \begin{minipage}{\textwidth}
        \centering
        \begin{subfigure}[b]{0.24\textwidth}
            \includegraphics[width=\textwidth]{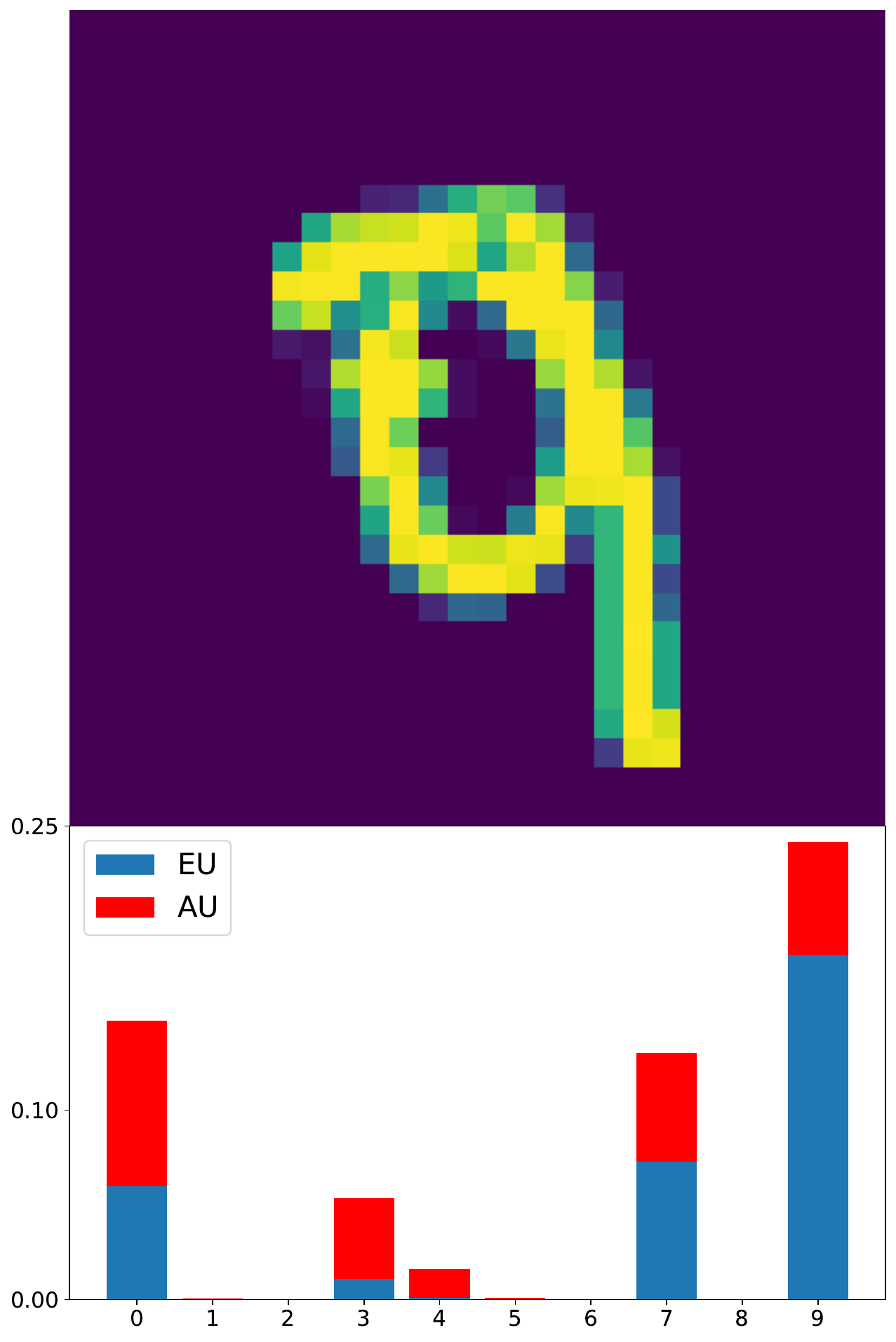}
        \end{subfigure}
        \hspace{1.5cm}
        \begin{subfigure}[b]{0.24\textwidth}
            \includegraphics[width=\textwidth]{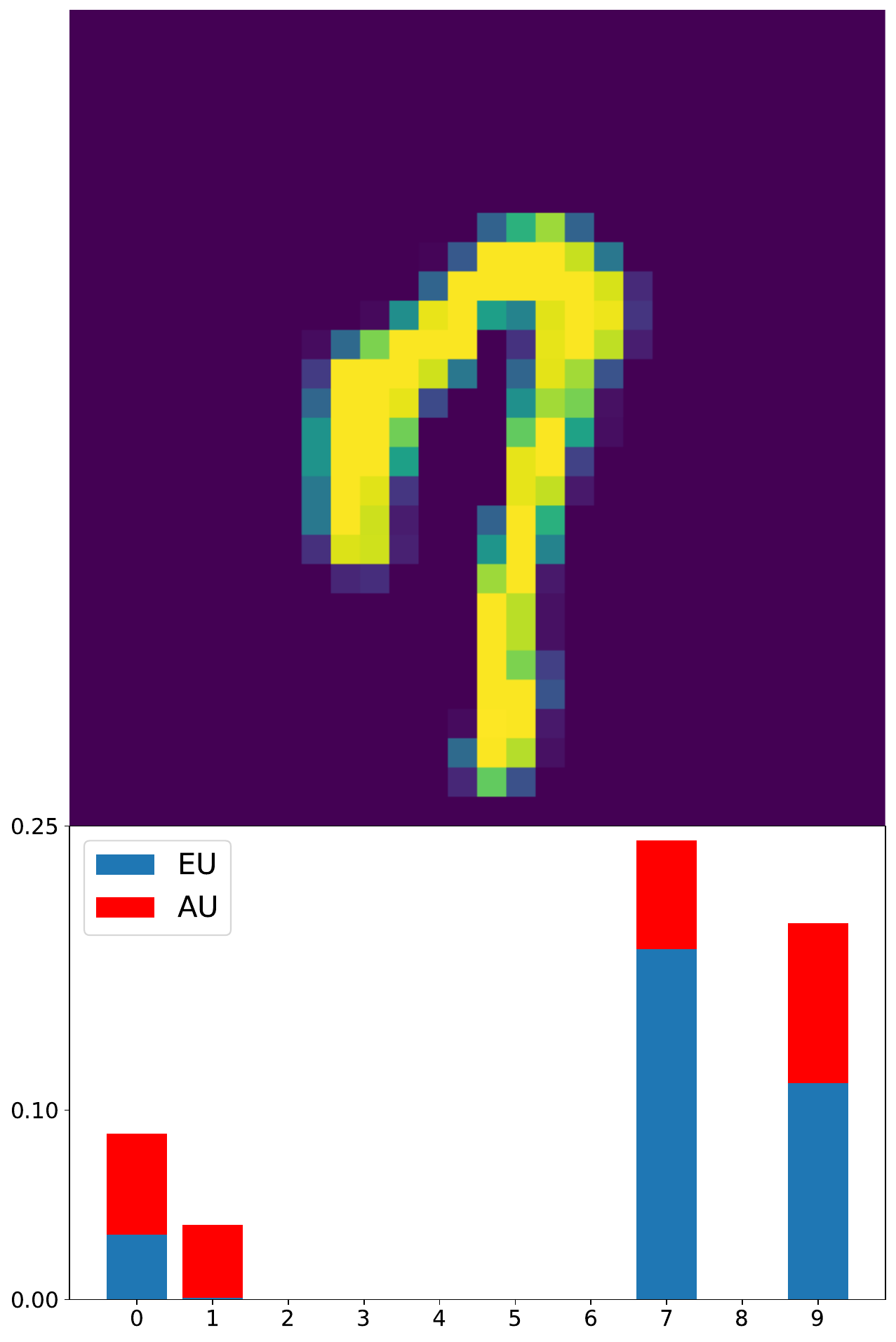}
        \end{subfigure}
        \hspace{1.5cm}
        \begin{subfigure}[b]{0.24\textwidth}
            \includegraphics[width=\textwidth]{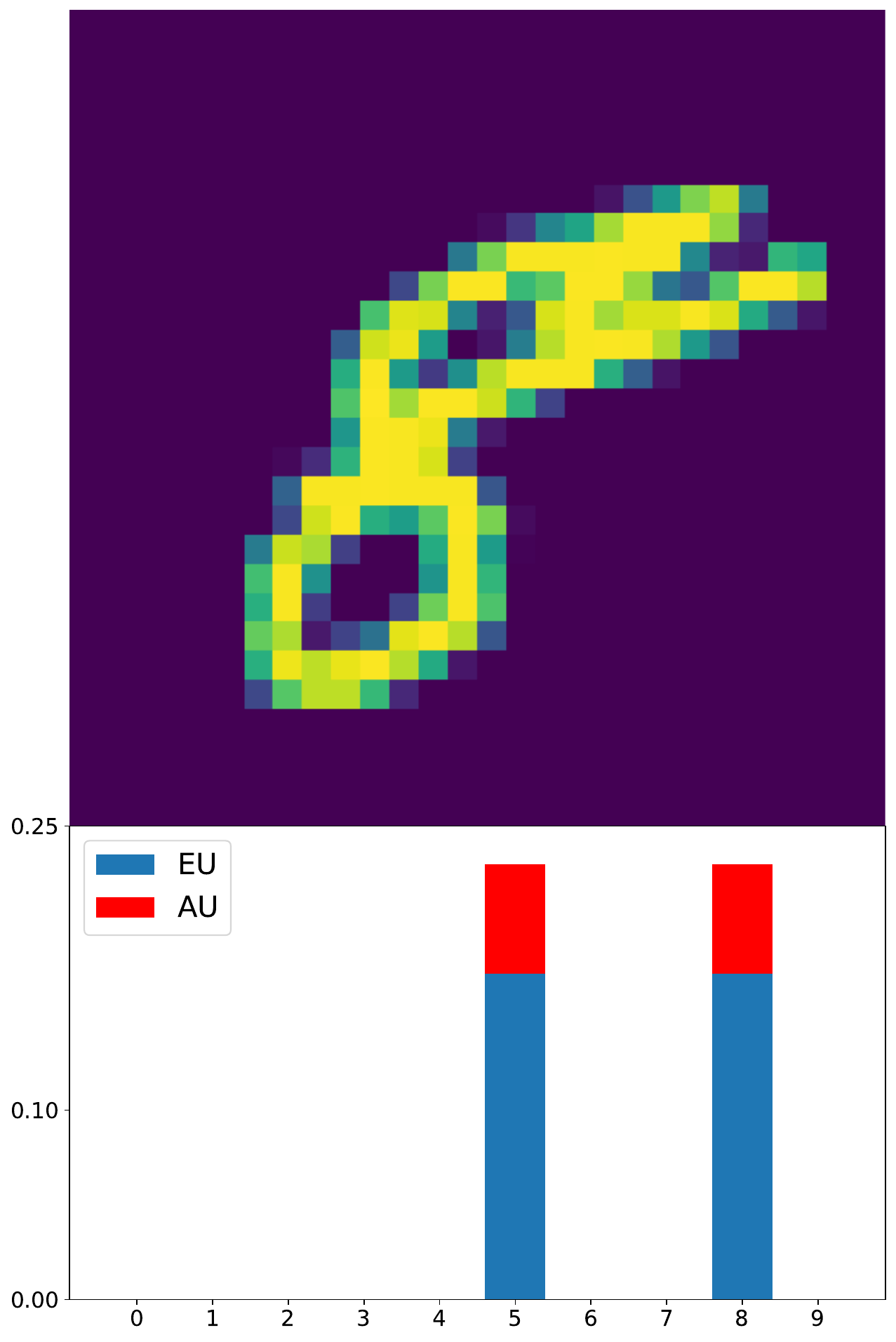}
        \end{subfigure}
    \end{minipage}

    \caption{MNIST instances with highest total \textit{(top)}, aleatoric \textit{(middle)}, and epistemic uncertainties \textit{(bottom)} along with their corresponding label-wise uncertainties.}
    \label{fig:tu_add}
\end{figure}
\clearpage

\newpage
\subsection{Accuracy-Rejection Curves}
\label{app:arc}
We train an ensemble of $5$ neural networks on the data sets using the setup outlined in Section \ref{appendix:exp_details}. Figure \ref{fig:arcs_supp} shows the accuracy-rejection curves for the medical data and the FMNIST data set. The accuracies are reported as the mean over five runs and the standard deviation is depicted by the shaded area.

\begin{figure}[htbp]
\centering
\begin{subfigure}{0.45\textwidth}
\includegraphics[width=0.90\linewidth]{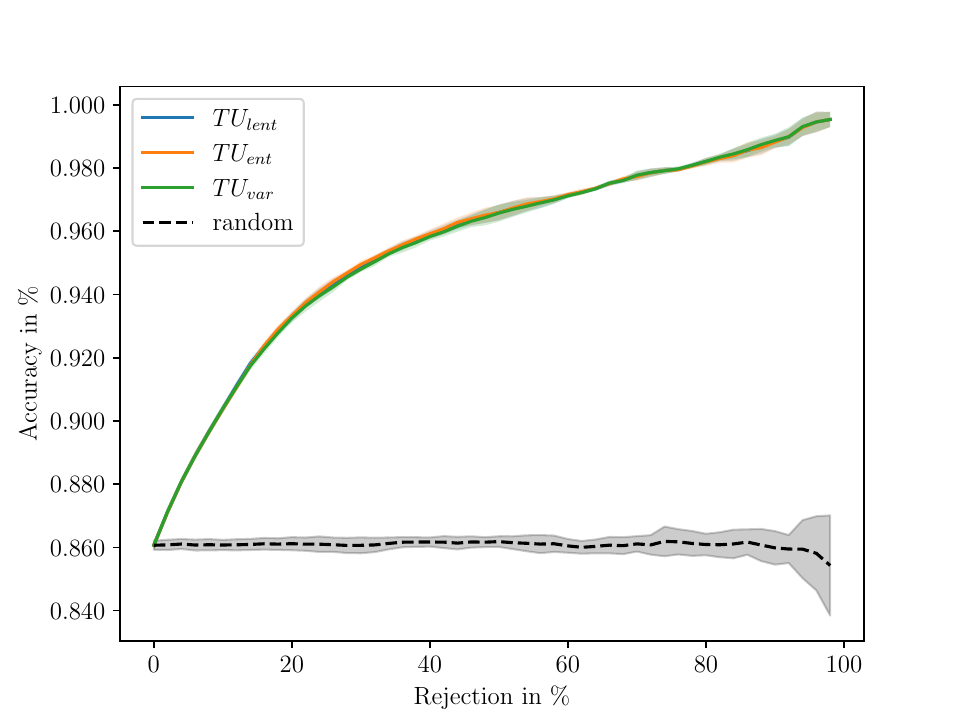}
\caption{PET/CT (TU)}
\end{subfigure}
\hfill
\begin{subfigure}{0.45\textwidth}
\includegraphics[width=0.90\linewidth]{accr_tu_cifar_lent.pdf}
\caption{FMNIST (TU)}
\end{subfigure}

\begin{subfigure}{0.45\textwidth}
\includegraphics[width=0.90\linewidth]{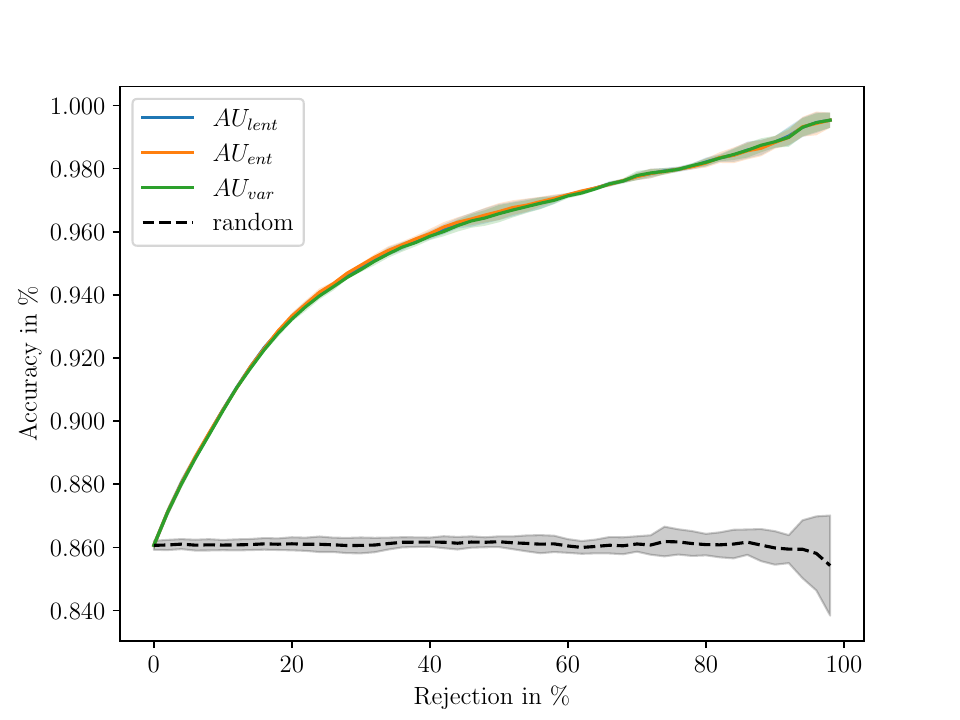}
\caption{PET/CT (AU)}
\end{subfigure}
\hfill
\begin{subfigure}{0.45\textwidth}
\includegraphics[width=0.90\linewidth]{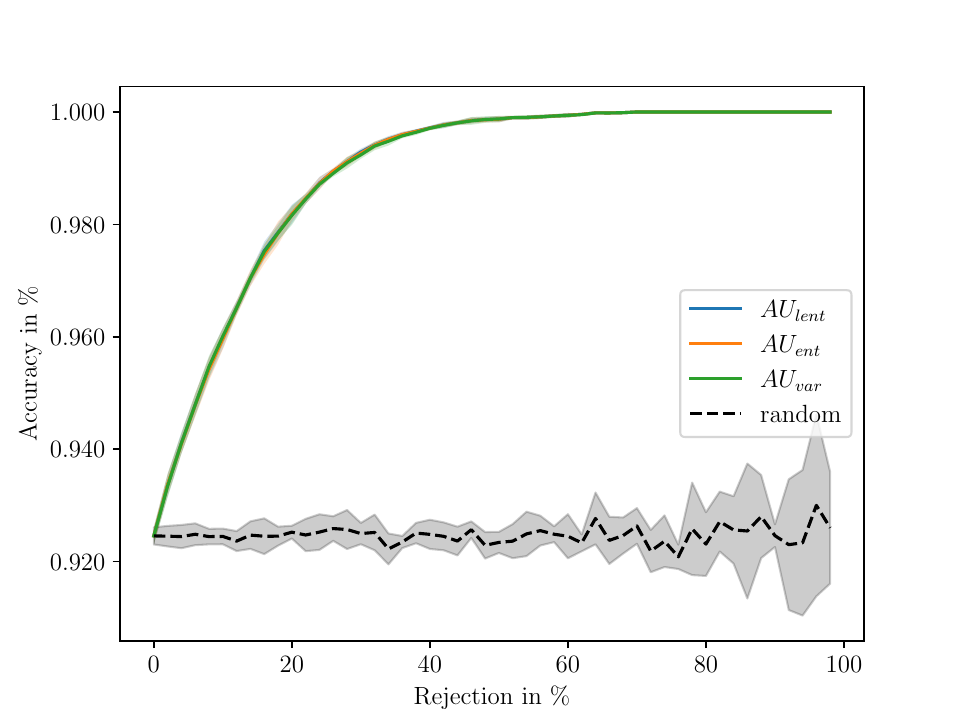}
\caption{FMNIST (AU)}
\end{subfigure}

\begin{subfigure}{0.45\textwidth}
\includegraphics[width=0.90\linewidth]{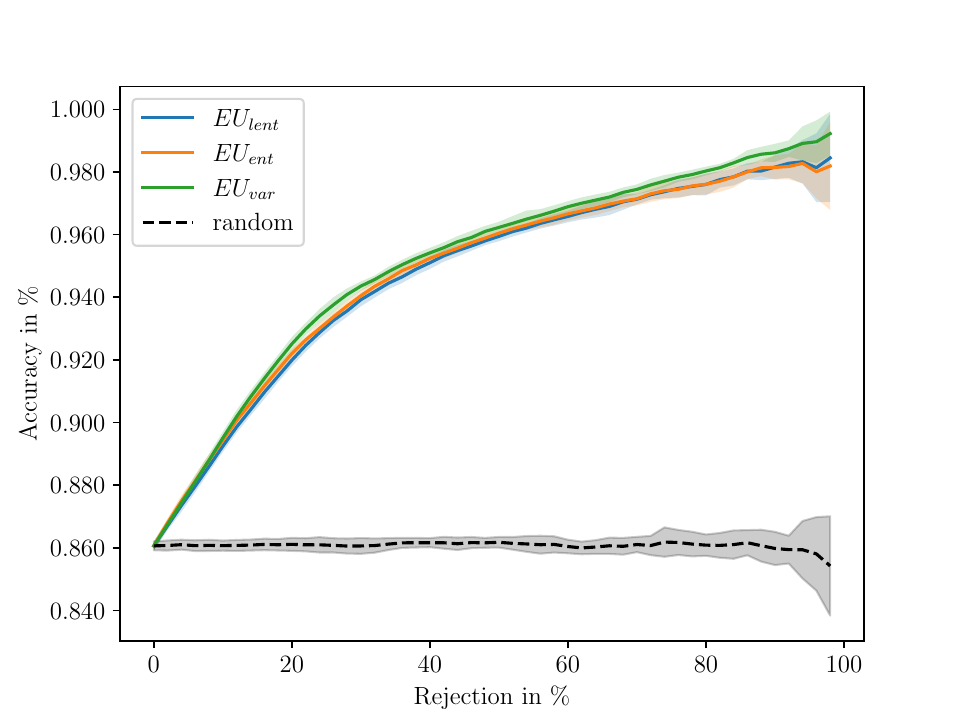}
\caption{PET/CT (EU)}
\end{subfigure}
\hfill
\begin{subfigure}{0.45\textwidth}
\includegraphics[width=0.90\linewidth]{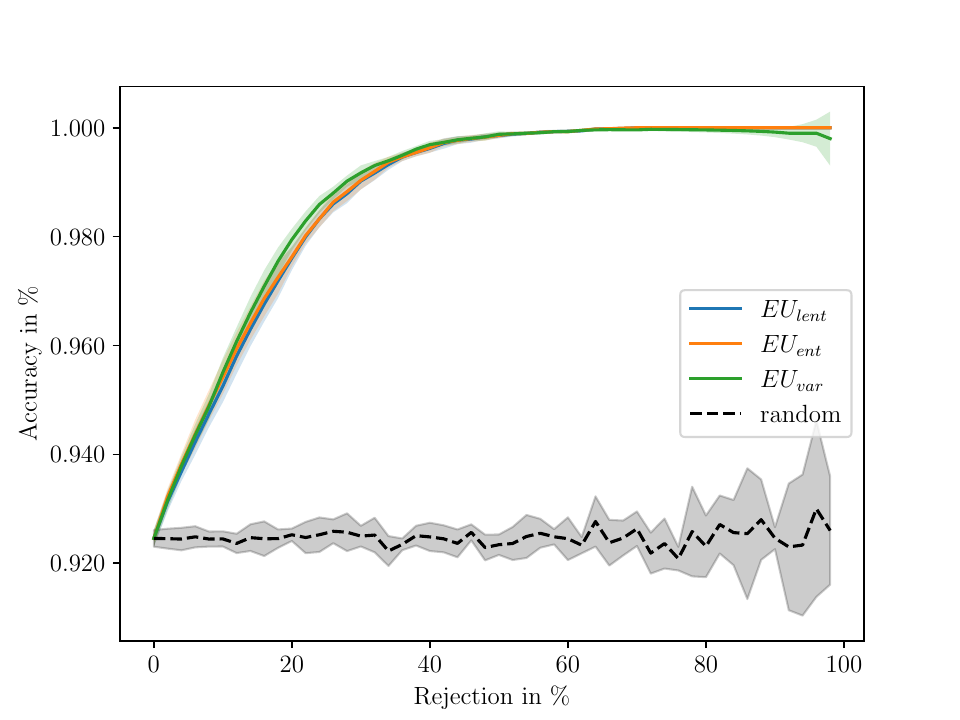}
\caption{FMNIST (EU)}
\end{subfigure}

\caption{Accuracy-rejection curves on PET/CT \textit{(left)} and FMNIST \textit{(right)}.}
\label{fig:arcs_supp}
\end{figure}
We observe results consistent with those presented in the main paper. The label-wise measures exhibit behavior very similar to the usual entropy measures, with most measures increasing monotonically.

\newpage
\subsection{Holdout Experiments}
\subsubsection*{Experimental Details}
\textbf{Data sets. } We perform experiments on CIFAR10 and FMNIST data sets with corresponding train-test splits. We only use pre-processing for the CIFAR10 data set. Each image is normalized using the mean and standard deviation per channel of the training set. Additionally, the training images are cropped randomly (while adding $4$ pixels of padding on every border and randomly flipped horizontally). \\[0.2cm]
\textbf{Ensemble. } The ensembles are built using two base models: a Convolutional Neural Network (\texttt{CNN}) and a \texttt{ResNet18} \citep{heResnet2016}. The \texttt{CNN} has two convolutional layers followed by two fully connect layers. The convolutional layers have $32$ and $64$ filters of $5$ by $5$ and the fully connected layers have $512$ and $10$ neurons, respectively. The layers have \texttt{ReLU} activations and the last layer uses a softmax function to output probabilities. The \texttt{ResNet18} model has a fully connected last layer of $10$ units and a softmax function to generate probabilities for $10$ classes. The output of the ensemble is generated by averaging over the outputs of the individual ensemble members.

\textbf{Experiments. } 
For the CIFAR10 data set, we train the ensemble for $20$ epochs on a small subset of the train data  ($10\%$ of the initial train data, the other $90\%$ is reserved as holdout). 
To prevent class imbalance, we remove instances with the highest EU class (EU per class is computed by first calculating the label-wise $\EU$ for all instances using the variance-based EU measure, and then averaging over instances from the same class) from the train data, and add the same amount of holdout data (from the class, which was identified after the initial $20$ epochs training as highest $\EU$ class). 
Although the amount of data for each class remains the same across epochs, the learner is progressively exposed to a broader range of examples from the class with highest EU. In other words, the approach effectively increases the total number of observations the learner encounters from that class over time without leading to class imbalance.
This step is repeated for $20$ epochs of \textit{continued} learning to ensure the model is trained on a diverse set of examples. Finally, we compare the epistemic uncertainty (of both the class with highest $\EU$ and the average of all other classes) \textit{before} and \textit{after} giving the learner access to more data from the class with highest $\EU$. We follow the same procedure for the FMNIST dataset (see executed configurations).

Executed configurations:
\begin{itemize}
    \item[(i)] For the CIFAR10 data set:
    \begin{itemize}
        \item The experiment was run with $20$ epochs of \textit{initial} training and $20$ epochs of \textit{continued} training.
        \item  A hold-out rate of $90\%$ was applied, indicating that a large portion of the data was initially withheld.
        \item The experiment was repeated for $5$ runs.
    \end{itemize}
       \item[(ii)] For the FMNIST data set:
    \begin{itemize}
        \item The experiment was run with $5$ epochs of \textit{initial} training and $5$ epochs of \textit{continued} training.
        \item A hold-out rate of $99.5\%$ was used, indicating that a large portion of the data was initially withheld.
        \item The experiment was repeated for $5$ runs.
    \end{itemize}
\end{itemize}

\subsubsection*{Experimental Results}
In Table \ref{table:ood} we present both the absolute and relative changes in the $\EU$ values for each dataset. Additionally, we include the changes in EU for other classes, with the average being reported. For comparison purposes, we also provide the absolute and relative changes in $\EU$ for the class experiencing the second-largest reduction in the $\EU$ values ("Next highest drop").
\vspace{-0.1cm}
\begin{table}[h!]
\centering
\resizebox{1\textwidth}{!}{
  \begin{tabular}{@{}ccccccc@{}}\toprule
  & \multicolumn{3}{c}{FMNIST} & \multicolumn{3}{c}{CIFAR10} \\ \midrule 
   & Max. EU class & Other classes & Next highest drop & Max. EU class & Other classes  & Next highest drop  \\ \midrule
  Absolute & \textbf{0.0070 $\pm$ 0.0006} & 0.0018 $\pm$ 0.0000 & 0.0029 $\pm$ 0.0000 & \textbf{0.0057 $\pm$ 0.0011} & 0.0023 $\pm$ 0.0000 & 0.0031 $\pm$ 0.0000 \\
  Relative &  \textbf{0.7934 $\pm$ 0.0616} &  0.3872 $\pm$ 0.0000 & 0.5619 $\pm$ 0.0000 & \textbf{0.5815 $\pm$ 0.0304} &  0.3682 $\pm$ 0.0000 & 0.4630 $\pm$ 0.0000 \\
  \bottomrule
  \end{tabular}
}
\captionof{table}{
  Absolute and relative changes in $\EU$. 
}
\label{table:ood}
\end{table}

\newpage 
\begin{figure}[h!]
    \centering

    \begin{subfigure}{\textwidth}
        \centering
        \includegraphics[width=0.8\linewidth]{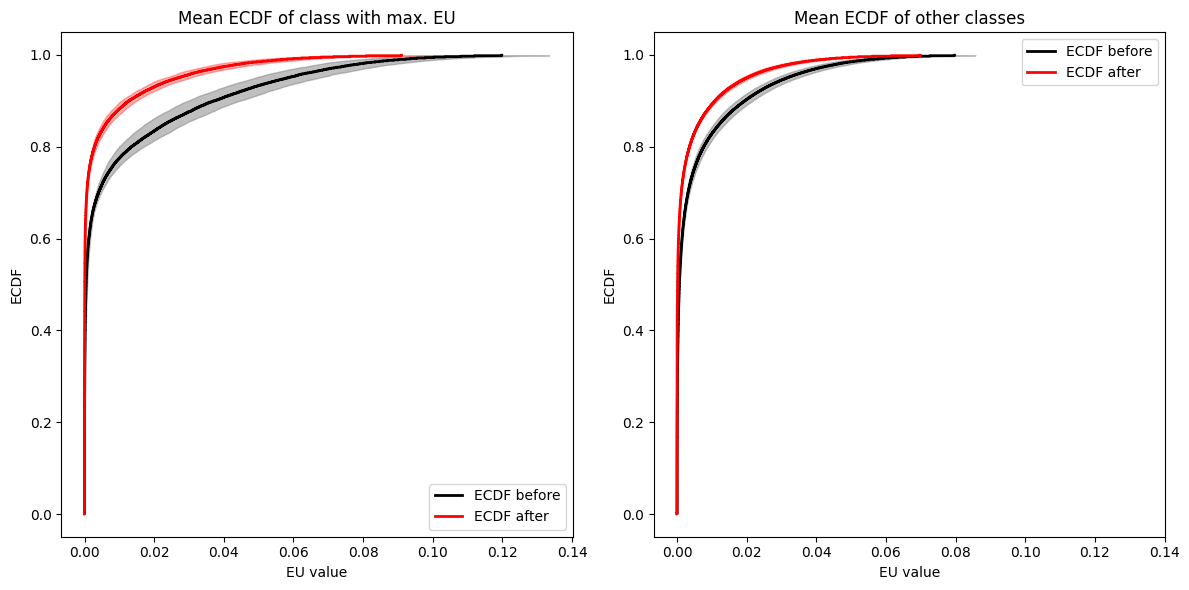} 
        \caption{CIFAR10 data set.}
        \label{fig:sub1}
    \end{subfigure}
    
    \vspace{1cm} 

    \begin{subfigure}{\textwidth}
        \centering
        \includegraphics[width=0.8\linewidth]{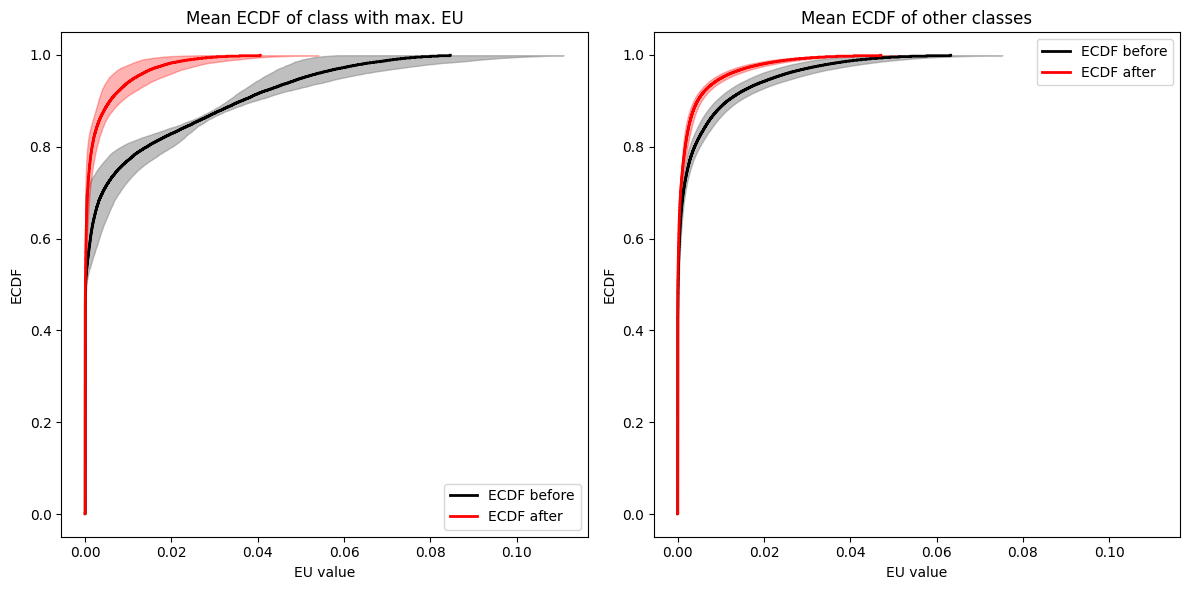} 
        \caption{FMNIST data set.}
        \label{fig:sub2}
    \end{subfigure}
    
    \caption{ECDF of class with maximal $\EU$ \textit{(left)} and ECDF of other classes \textit{(right)}.}
    \label{fig:test}
\end{figure}

Figure \ref{fig:test} shows the empirical cumulative distribution function (ECDF), averaged over $5$ runs of the experiment for the $\EU$ values that we observe. On the left we see the ECDF for the class that we identified as having the "highest $\EU$" after the \textit{initial} training, and on the right the averaged ECDF of the "other classes".

\textbf{Conclusion. } We conclude that providing the learner with more data from the highest $\EU$ class decreases $\EU$ for this class the most. While $\EU$ for other classes will not remain necessarily constant, it is also important to note that $\EU$ is also \textit{not} increasing for other classes.

\end{document}